\newlength\aftertitskip     \newlength\beforetitskip
\newlength\interauthorskip  \newlength\aftermaketitskip
\pgfplotsset{compat=1.18}
\newcommand{\BlackBox}{\rule{1.5ex}{1.5ex}}  
\newenvironment{proof}{\par\noindent{\bf Proof\ }}{\hfill\BlackBox\\[2mm]}
\newtheorem{theorem}{Theorem}
\newtheorem{lemma}[theorem]{Lemma} 
\newtheorem{proposition}[theorem]{Proposition} 
\newtheorem{corollary}[theorem]{Corollary}
\newtheorem{definition}[theorem]{Definition}
\DeclareMathOperator*{\argmin}{arg\,min}
\DeclareMathOperator*{\argmax}{arg\,max}
\DeclareMathOperator{\sign}{sign}
\newcommand{\RR}{\protect\mathbb{R}}
\newcommand{\EE}{\protect\mathbb{E}}
\newcommand{\PP}{\protect\mathbb{P}}
\newcommand{\alg}{\protect\mathcal{A}}
\newcommand{\Fclass}{\protect\mathcal{F}}
\newcommand{\BoundedF}[2]{\protect{[-#2,#2]^{#1}}}
\newcommand{\Hilbert}{\protect\mathcal{H}}
\newcommand{\chull}{\protect\mathrm{co}}
\newcommand{\Span}{\protect\mathrm{span}}
\newcommand{\dist}{\raise.17ex\hbox{$\scriptstyle\mathtt{\sim}$}}
\newcommand{\risk}{\protect{\mathrm{Risk}}}
\newcommand{\regret}[1]{\protect{\mathrm{Regret}_{#1}}}
\newcommand{\norm}[1]{\left\lVert #1 \right\rVert}
\newcommand{\hnorm}[1]{\left\lVert #1 \right\rVert_\Hilbert}
\newcommand{\allsamples}{\cup_{n=1}^\infty (X\times Y)^n}
\newcommand{\ip}[2]{\left< #1, #2\right>}
\newcommand{\featuremap}[2]{#1 : #2 \rightarrow \Hilbert}
\newcommand{\llin}{\protect{\ell_{\mathrm{linear}}}}
\newcommand{\lmis}{\protect{\ell_{01}}}
\newcommand{\risklin}{\protect{\riskL{\mathrm{linear}}}}
\newcommand{\riskmis}{\protect{\riskL{01}}}
\newcommand{\lmargin}[1]{\ell_{#1}}
\newcommand{\riskL}[1]{\risk_{#1}}
\newcommand{\pred}[1]{[\![ #1 ]\!]}
\newcommand{\logdelta}{\log\left(\frac{1}{\delta}\right)}
\newcommand{\TwoVector}[2]{
	\left(\begin{array}{c}
		#1  \\ 
		#2  
	\end{array} \right)
}
\newcommand{\Tsymmetrc}{\left(\begin{array}{cc}
		1-\sigma & \sigma \\ 
		\sigma & 1-\sigma
	\end{array} \right)}
\newcommand{\Msymmetric}{\frac{1}{1- 2 \sigma}\left( \begin{array}{cc}
		1-\sigma & -\sigma \\ 
		-\sigma & 1-\sigma
	\end{array} \right)}
\title{
	Sparse Robust Classification via the Kernel Mean
}
\author[1]{Brendan van Rooyen}
\author[2]{Aditya Krishna Menon}
\author[3]{Robert C. Williamson}
\affil[1]{Decisions 360}
\affil[2]{Google Research, NY}
\affil[3]{University of T\"{u}bingen}
\date{}
\begin{document}
	
\maketitle

\begin{abstract}

Many leading classification algorithms output a classifier that is a weighted average of kernel evaluations. Optimizing these weights is a nontrivial problem that still attracts much research effort. Furthermore, explaining these methods to the uninitiated is a difficult task. Letting all the weights be equal leads to a conceptually simpler classification rule, one that requires little effort to motivate or explain, the mean. Here we explore the consistency, robustness and sparsification of this simple classification rule.

\end{abstract}

\section{Introduction}

In the problem of binary classification, the goal is to learn a classifier that accurately predicts the corresponding label of an observed instance. Given a sample $\{(x_i,y_i)\}_{i=1}^n$, many classification algorithms, such as the support vector machine, logistic regression, boosting (for a particular choice of weak learners) and so on, output a classifier of the form,
\begin{equation*}
f(x) = \sign\left( \sum\limits_{i=1}^n\alpha_i y_i K(x_i,x) \right),
\end{equation*}
with $\alpha_i \in \RR$ and $K(x,x')$ a function that measures the similarity of two instances $x$ and $x'$. Although there are many sophisticated methods that can optimize the weights $\alpha_i$, it is nevertheless a non-trivial problem that still attracts a lot of research effort. Furthermore, \emph{explaining} these methods to the uninitiated is a difficult task. Letting all $\alpha_i$ be equal leads to a conceptually simpler classification rule, one that requires little effort to motivate or explain: the mean classifier,
$$
f(x) = \sign\left(\frac{1}{n} \sum\limits_{i=i}^{n} y_i K(x_i,x)\right).
$$
The above is a simple and intuitive classification rule. It classifies by the total similarity to the previously observed positive and negative instances, with the most similar class the output of the classifier. It has been studied previously, for example, in chapter one of \citep{Scholkopf:2002} and further in \citep{Devroye1996,Servedio1999,Kalai2008,Balcan2008}. We will show that in addition to the obvious simplicity, this approach has some \emph{unique} advantages.
\\
\\
We argue for the mean classifier as follows:

\begin{itemize}
	\item We show that the mean classifier is the empirical risk minimizer for a classification-calibrated loss function (theorems \ref{Surrogate Regret Bound for Linear Loss} and \ref{Mean Classifier minimizes Linear Loss}).

	\item We explore the robustness properties of the mean classifier. We relate its noise tolerance to the \emph{margin for error} in the solution (theorem \ref{Margins and Approximation}). 
	
	\item In a certain sense, the mean classifier is the \emph{only} surrogate loss minimization method that is \emph{immune} to the effects of symmetric label noise (Theorem \ref{Uniqueness of Linear Loss}). Furthermore, we show how the mean classifier avoids the negative results outlined in \citep{Long2008}, which show that small amounts of label noise can break standard methods.
	
	\item We present other results beyond those for symmetric label noise (Section \ref{sec:beyond-symmetric-label-noise}).
	
	\item We show how a \emph{simple} sub-sampling scheme can be used to sparsely approximate \emph{any} kernel classifier, with provable approximation guarantees (Section \ref{sec:sparse-approximation-via-the-exploitation-of-clusters}).
	
	\item Finally, we present experiments corroborating the sparseness and robustness guarantees outlined in our theorems (Section \ref{sec:experimental-validation}).
	
\end{itemize}
The result is a conceptually simple algorithm for learning classifiers that is accurate, easily parallelized, robust, and firmly grounded in theory. All proofs are collected in the appendix \ref{sec:proofs-of-theorems-in-the-main-text}.

\section{Background Ideas}

Let $X$ be the instance space and $Y =\{-1,1\}$ the label space. A \emph{classifier} is a bounded function $f \in \RR^X$, with $f(x)$ the \emph{score} and  $\sign(f(x))$ the predicted label. A loss is a function $\ell : Y \times \RR \rightarrow \RR$. We will always assume $X$ to be a measure space with all respective classifiers and loss measurable functions. We measure the distance between classifiers via the \emph{supremum distance}, 
$$
\lVert f - f' \rVert_\infty = \sup_{x \in X} |f(x) - f'(x)|.
$$
For any Boolean predicate $p$, let $\pred{p(x)}$ be the function that returns $1$ if $p$ is true and $0$ otherwise. Define the misclassification loss $\lmis(y,v) = \pred{y v \leq 0}$. Note that $\lmis(y,0) = 1$ always. This non-standard presentation of misclassification loss will enhance the readability of many of the proofs. An output of zero can be viewed as \emph{abstaining} from choosing a label. Let $P \in \PP(X \times Y)$ be a distribution over instance label pairs and $S = \{(x_i, y_i)\}_{i=1}^{n}$ be a sample comprising of $n$ independent draws from $P$. For any loss, \emph{risk} and \emph{sample risk} of $f$ are defined as 
\begin{align*} 
\riskL{\ell}(P,f) := \EE_{(x,y) \dist P} \ell(y,f(x)) \ \text{and}\ & \riskL{\ell}(S,f) := \frac{1}{n} \sum\limits_{i = 1}^{n} \ell(y_i,f(x_i)), 
\end{align*}
respectively. Define the \emph{Bayes optimal} classifier and regret it to be
\begin{align*} 
f_{\ell, P} := \argmin_{f \in \RR^X} \riskL{\ell}(P,f) \ \text{and}\ & \regret{\ell}(P,f) = \riskL{\ell}(P,f) - \riskL{\ell}(P,f_{\ell, P}).
\end{align*}
respectively \footnote{We assume that an $\argmin$ exists, which will be the case for losses and function classes under consideration.}. The risk of the Bayes optimal classifier is the smallest possible risk under the assumption that the data is drawn from $P$. The regret measures the suboptimality of $f$. For misclassification loss, the Bayes optimal classifier takes a very simple form, $f_{\mathrm{01}, P}(x) = 1$ if $P(Y=1| X = x) \geq \frac{1}{2}$ and $-1$ otherwise. 
\\
\\
A \emph{classification algorithm} is a function, 
$$
\alg : \allsamples \rightarrow \RR^X,
$$ 
that given a training set $S$ outputs a classifier. Good classification algorithms should produce classifiers with low risk of misclassification. A naive classification algorithm proceeds via the direct minimization of, 
$$
\riskmis(S,f) := \riskL{\lmis}(S,f),
$$
with $f$ lying in some suitable large function class $\Fclass$. Even for a reasonably simple $\Fclass$, this approach is computationally infeasible. Many computationally feasible classification algorithms, such as the SVM, logistic regression, boosting (for a particular choice of weak learners) and so on proceed via minimizing a convex potential (or margin) loss function over a linear function class. 

\subsection{Linear Function Classes, Kernel Methods and Convex Potential Losses}

Linear and kernel methods \citep{Shawe-Taylor:2004,Scholkopf:2002} constitute a powerful class of machine learning techniques. They proceed by mapping the instances into a high (possibly infinite) dimensional space, before applying standard procedures from convex optimization to find a suitable classifier. The representer theorem \citep{kimeldorf1970correspondence,Scholkopf:2002} together with several recent algorithmic advances \citep{williams2001using,Rahimi:2007,shalev2011pegasos} provides computationally feasible means to apply kernel methods in practice.
\\
\\
Denote by $\Hilbert$ an abstract Hilbert space, with inner product $\ip{v_1}{v_2}_\Hilbert$ and norm $\norm{v}_\Hilbert = \sqrt{\ip{v}{v}_\Hilbert}$. When the Hilbert space is clear from context, we drop the subscript. In usual \emph{linear} approaches to machine learning, $\Hilbert = \RR^d$. The power of kernel methods comes from working with infinite dimensional $\Hilbert$. For a feature map $\featuremap{\phi}{X}$ define the linear function class,
$$
\Fclass_\phi := \left\{ f_\omega(x) = \langle \omega , \phi(x) \rangle : \omega \in \Hilbert \right\},
$$
and the bounded linear function class,
$$
\Fclass_\phi^r := \left\{ f_\omega(x) = \langle \omega , \phi(x) \rangle : \omega \in \Hilbert , \norm{\omega} \leq r \right\},
$$
with,
$$
K(x,x') = \ip{\phi(x)}{\phi(x')},
$$
the \emph{kernel} corresponding to $\phi$. We will assume throughout that the feature map is \emph{bounded}, $\norm{\phi(x)} \leq 1$ for all $x$. In the language of kernels, this ensures $K(x,x') \in [-1,1]$. By the Cauchy-Schwarz inequality $\Fclass_\phi^r \subseteq \BoundedF{X}{r}$. When convenient we identify $f_\omega$ with its weight vector $\omega$, and as shorthand write $\riskL{\ell}(P,\omega) := \riskL{\ell}(P, f_\omega)$. We call $\phi$ \emph{universal} \citep{steinwart2001influence,micchelli2006universal} if $\Fclass_\phi$ is dense in $\RR^X$. An example of a universal feature map is that associated with the Gaussian kernel,
$$
K(x,x') = \exp\left(-\frac{\norm{x - x'}_2^2}{2}\right), \ \forall x, x' \in \RR^d.
$$
As a surrogate to minimizing $\riskmis(P,f)$ over \emph{all} possible classifiers, standard approaches to learning classifiers choose a \emph{convex potential} loss function $\ell$ and return the classifier,
$$
f^* = \argmin_{f \in \Fclass_\phi^r} \riskL{\ell} (S, f).
$$
\begin{definition}
	A loss $\ell$ is a \emph{convex potential} if there exists a convex function $\psi : \RR \rightarrow \RR$ with $\psi(v) \geq 0$, $\psi'(0) < 0$ and $\lim_{v \rightarrow \infty} \psi(v) = 0$, with,
	$$
	\ell(y,v) = \psi(y v).
	$$
	
\end{definition}
The requirement that $\psi'(0) < 0$ ensures that all convex potential loss functions are \emph{classification calibrated}.

\begin{definition}
	A loss function $\ell$ is \emph{classification calibrated} \citep{Bartlett2006} if for all distributions $P$ and sequences of classifiers $f_n$,
	$$
	\regret{\ell}(P,f_n) \rightarrow 0 \implies \regret{\mathrm{01}}(P,f_n) \rightarrow 0.
	$$
\end{definition} 
All standard losses used in machine learning, that is, hinge, logistic, and exponential losses, are classified calibrated \citep{Bartlett2006}. The regularization parameter $r$ governs the trade-off between over-fitting versus small sample risk. Utilizing a universal kernel and allowing $r \rightarrow \infty$ as $n \rightarrow \infty$ yields a consistent algorithm for learning classifiers.
\\
\\
The representer theorem \citep{Scholkopf:2002,kimeldorf1970correspondence} states that $f^*$ has the form,
$$
f^*(x) = \sum_{i=1}^{n} \alpha_i K(x,x_i).
$$ 
We will explore the special case where $\alpha_i = \frac{y_i}{n}$, 
\begin{equation}\label{Kernel Mean solution}
f(x') = \frac{1}{n} \sum\limits_{i = 1}^{n} y_i K(x_i,x').
\end{equation}

\section{Why the Mean?}

The mean is not only an intuitively appealing classification rule, it also arises as the optimal classifier for the linear loss, considered previously in \citep{Reid2009b} and \citep{Sriperumbudur2009}. Let,
$$
\llin(y,v) := 1 - y v,\ v \in \RR.
$$ 
If $v \in \{-1,1\}$, then $\lmis(y,v) = \frac{1}{2}\llin(y,v)$. Allowing $v \in [-1,1]$ provides \emph{convexification} of misclassification loss. For  $v \in [-1,1]$, $\lmis(y,v) \leq \llin(y,v)$ . Furthermore, linear loss is classification calibrated.

\begin{lemma}[\citep{Steinwart:2008} theorem 2.31]\label{Surrogate Regret Bound for Linear Loss} 
For all distributions $P$ and for all $f \in \BoundedF{X}{1}$,
	$$
	\regret{\mathrm{01}}(P,f) \leq \regret{\mathrm{linear}}(P,f).
	$$
\end{lemma}
We include a proof for completeness. By a simple corollary of the lemma \ref{Surrogate Regret Bound for Linear Loss}, linear loss is classification calibrated, provided that we only work with classifiers $f \in \BoundedF{X}{1}$.  For misclassification loss, the only property of the score of interest is its sign, and not its magnitude. Therefore, we lose nothing by working with this restriction. Linear loss is therefore a suitable surrogate loss for learning classifiers much like the hinge, logistic, and exponential loss functions. Notice that the linear loss is \emph{not} a convex potential loss. As a surrogate for minimizing $\riskmis(P,f)$ over \emph{all} classifiers $f \in \BoundedF{X}{1}$,  we will minimize $\risklin(S,f)$ over $f \in \Fclass_{\phi}^1$.
\\
\\
For any sample $S \in \allsamples$ define the \emph{mean vector} and \emph{normalized mean vector} as,
\begin{align*} 
\Phi(S) := \frac{1}{n} \sum\limits_{i = 1}^{n} y_i \phi(x_i) \ \text{and}\ & \hat{\Phi}(S) := \frac{\Phi(S)}{\norm{\Phi(S)}}, 
\end{align*}
respectively. Equation \ref{Kernel Mean solution} can be written as $f(x) = \ip{\Phi(S)}{\phi(x)}$. The mean vector arises as the optimal solution for the linear loss.

\begin{lemma}[ \citep{Sriperumbudur2009}]\label{Mean Classifier minimizes Linear Loss} The mean and normalized mean vectors satisfy,
	$$
	\hat{\Phi}(S) = \argmin_{\omega : \norm{\omega} \leq 1} \risklin(S,\omega) = \argmin_{\omega : \norm{\omega} \leq 1} 1 - \langle \omega , \Phi(S) \rangle
	$$
	with minimum linear loss given by $1 - \norm{\Phi(S)}$. Furthermore, classifying using $\ip{\hat{\Phi}(S)}{\phi(x)}$ is equivalent to classifying according to equation \ref{Kernel Mean solution}.
	
\end{lemma}
The proof is a straightforward application of the Cauchy-Schwarz inequality. As $\hat{\Phi}(S) = \lambda \Phi(S)$, $\lambda > 0$, they both produce the same classifier. Changing the norm constraint to $\norm{\omega} \leq r$ merely scales the classifier, and therefore does not change its misclassification performance. Furthermore, we have the following approximation result.

\begin{theorem}[\citep{Altun2006}]\label{Means and Rademacher}
	For all distributions $P$ and for all bounded feature maps $\featuremap{\phi}{X}$,
	$$
	\norm{\Phi(P) - \Phi(S)} \leq \frac{2}{\sqrt{n}} + \sqrt{\frac{2 \log(\frac{1}{\delta})}{n}},
	$$
	with probability at least $1 - \delta$ on a sample $S$ of $n$ independent draws from $P$.	
\end{theorem}
The proof is obtained via a simple application of McDiarmid's inequality. \citep{tolstikhin2016minimax} show that this simple estimate is in fact minimax optimal to estimate $\Phi(P)$. Coupled with the Cauchy-Schwarz inequality, theorem \ref{Means and Rademacher} yields,
$$
\risklin(P,\omega) \leq \risklin(S,\omega) + \frac{2}{\sqrt{n}} + \sqrt{\frac{2 \log(\frac{1}{\delta})}{n}}, \ \forall \omega \ \mathrm{st} \ \norm{\omega} \leq 1.
$$
Therefore the mean classifier minimizes an empirical approximation of $\risklin(P,\omega)$. 

\subsection{Relation to the SVM}

For a regularization parameter $r$, the SVM finds,
$$
\argmin_{\omega : \norm{\omega} \leq r} \frac{1}{n}\sum\limits_{i = 1}^{n} \max(0, 1 - y_i\langle \omega , \phi(x_i) \rangle).
$$
If we take $r = 1$, by Cauchy-Schwarz $\max(0, 1 - y\langle \omega , \phi(x) \rangle) = 1 - y\langle \omega , \phi(x) \rangle$ and the above objective is equivalent to that of theorem \ref{Mean Classifier minimizes Linear Loss}. The mean classifier is the optimal solution to a highly regularized SVM. This has been observed before in \citep{Tibshirani:2002} and \citep{bedo2006efficient}. Proposition 9 of \citep{van2015learning} shows that the mean classifier is the solution obtained from \emph{any} sufficiently regularized method that classifies according to,
$$
\argmin_{\omega : \norm{\omega} \leq r} \frac{1}{n}\sum\limits_{i = 1}^{n} \ell(y_i, \langle \omega , \phi(x_i) \rangle),
$$
with $\ell$ a convex potential loss function.

\subsection{Relation to Maximum Mean Discrepancy}

Let $P,Q \in \PP(X)$ be two distributions defined over the instance space and define \emph{maximum mean discrepancy} \citep{Gretton:2012}, 
\begin{align*}
\mathrm{MMD}_\phi(P,Q) &:= \max_{\omega : \norm{\omega} \leq 1} \frac{1}{2} |\EE_{x \dist P} \ip{\omega}{\phi(x)} - \EE_{x \dist Q} \ip{\omega}{\phi(x)}| = \frac{1}{2} \norm{\Phi(P) - \Phi(Q)}.
\end{align*}
$\mathrm{MMD}_\phi(P,Q)$ can be seen as a restricted variational divergence,
$$
V(P , Q) =\max_{f \in \BoundedF{X}{1}} \frac{1}{2}|\EE_{x \dist P} f(x) - \EE_{x \dist Q}f(x)|,
$$
a commonly used metric on probability distributions, where $f \in \Fclass_\phi^1 \subseteq \BoundedF{X}{1}$. Both variational divergence and MMD are examples of integral probability metrics \citep{muller1997integral}. \citep{Gretton2006} apply empirical approximations of $\Phi(P)$ and $\Phi(Q)$ as a means of \emph{testing} the null hypothesis that $P=Q$. This test can be understood as finding a \emph{classifier} that can distinguish $P$ from $Q$. Here we show that MMD is closely related to classification and linear loss minimization.
\\
\\
Let $P_{\pm} \in \PP(X)$ be the conditional distribution over instances given a positive or negative label respectively. Define the distribution $P \in \PP(X \times Y)$ that first samples $y$ uniformly from $\{-1,1\}$ and then samples $x \dist P_y$. Then,
$$
\mathrm{MMD}_\phi(P_+,P_{-}) = \max_{\omega : \norm{\omega} \leq 1}  |\EE_{(x,y) \dist P} \ip{\omega}{y \phi(x)} | = \norm{\Phi(P)}.
$$
Therefore, if we assume that positive and negative classes are equally likely, the mean classifier classifies using the $\omega$ that ``witnesses" the MMD, i.e. it attains the max in the above.

\subsection{Relation to Kernel Density Estimation}

Obviously, the mean classifier is a \emph{discriminative} approach. Restricting to kernels with $K(x,x') \in [0,1]$ and $\int K(x,x')dx \leq C$, such as the Gaussian kernel, it can be seen as the following \emph{generative} approach: estimate $P$ with $\tilde{P}$, with class conditional distributions estimated by kernel density estimation. Letting $S_{\pm} = \left\{ (x, \pm 1) \right\} \subseteq S$ take,
$$
\tilde{P}( X = x | Y = \pm 1 ) \propto \frac{1}{|S_\pm|}\sum\limits_{x' \in S_\pm} K(x,x'),
$$
and $\tilde{P}(Y = 1) = \frac{|S_+|}{n}$. To classify new instances, use the Bayes optimal classifier for $\tilde{P}$. This yields the same classification rule as (\ref{Kernel Mean solution}). This is the ``potential function rule" discussed in \citep{Devroye1996}.

\subsection{Extension to Multiple Kernels}

To ensure the practical success of any kernel method, it is important that the \emph{correct} feature map be chosen. This is especially true when using the mean classifier. Even for universal $\phi$, it is \emph{not} the case that $\Fclass_\phi^1$ is dense in $\BoundedF{X}{1}$. It is essential therefore that we use the correct feature map.
\\
\\
So far we have only considered the problem of learning with a single feature map, and not the problem of \emph{learning the feature map}. Given $k$ feature maps $\phi_i : X \rightarrow \Hilbert_i$, $i \in [1;k]$, multiple kernel learning \citep{lanckriet2004learning, Bach2008,Hussain2011,Cortes2013} considers learning over a function class that is the convex hull of the classes $\Fclass_{\phi_i}^1$,
$$
\Fclass := \left\{f(x) = \sum_{i=1}^k \alpha_i \ip{\omega_i}{\phi_i(x)}_{\Hilbert_i} : \norm{\omega_i}_{\Hilbert_i} \leq 1, \alpha_i \geq 0, \sum\limits_{i=1}^k \alpha_i = 1\right\}.
$$
Denote the $k$ simplex by $\Delta_k$. By an easy calculation,
\begin{align*}
\min_{f \in \Fclass} \frac{1}{n}\sum\limits_{i = 1}^{n} 1 - y_i f(x_i) &= \min_{\alpha \in \Delta_k, \omega_i \in \Hilbert_i} \sum_{i=1}^k \alpha_i \left(1 - \ip{\omega_i}{\Phi_i(S)}_{\Hilbert_i}\right)  \\
&= \min_{\alpha \in \Delta_k} \sum_{i=1}^k \alpha_i \left(1 - \norm{\Phi_i(S)}_{\Hilbert_i}\right)\\
&= \min_{i \in [1;k]} \left(1 - \norm{\Phi_i(S)}_{\Hilbert_i} \right),
\end{align*}
where the first line follows from the definition of $\Fclass$, the second by minimizing on each $\omega_i$, and the final line follows from the linearity in $\alpha$. In other words, we choose the feature map that minimizes $1 - \norm{\Phi_i(S)}_{\Hilbert_i}$. This is in contrast to the usual multiple kernel learning techniques that generally do not pick out a \emph{single} feature map. Furthermore, we have the following generalization bound.

\begin{theorem}\label{Mean Classifier Bound: Multiple}
	For all distributions $P$ and for all finite collections of bounded feature maps $\phi_i : X \rightarrow \Hilbert_i$, $i \in [1;k]$
	$$
	\risklin(P,\hat{\Phi}_i(S)) \leq \risklin(S,\hat{\Phi}_i(S)) + \frac{2}{\sqrt{n}} + \sqrt{\frac{2\left( \logdelta + \log(k) \right)}{n}}, \forall i \in [1 ;k],
	$$
	with probability at least $1 - \delta$ on a sample $S$ of $n$ independent draws from $P$.
\end{theorem}
The proof proceeds via an application of theorem \ref{Means and Rademacher}, together with a union bound and an application of the Cauchy-Schwarz inequality. Classifying according to the feature map that minimizes $1 - \norm{\Phi_i(S)}_{\Hilbert_i}$ can be understood as minimizing the right-hand side of the bound in Theorem \ref{Mean Classifier Bound: Multiple}.The quantity,
$$
\norm{\Phi(S)} = \sqrt{\frac{1}{n^2} \sum\limits_{i = 1}^{n} \sum\limits_{j = 1}^{n} y_i y_j K(x_i,x_j)},
$$
can be thought of as the ``self-similarity" of the sample, and has appeared previously in the literature in kernels for sets \citep{gartner2002multi}. Our multiple kernel learning approach chooses the kernel with the highest self-similarity, the kernel that on average renders those instances with the same label similar and those with different labels dissimilar.

\section{The Robustness of the Mean Classifier}\label{sec:the-robustness-of-the-mean-classifier}

Invariably, when working with real-world data, one has to deal with training data that has been corrupted in some way. Here, we examine the robustness of the mean classifier to perturbations of $P$. We do not consider the statistical issues of learning from a corrupted distribution. For detailed treatment of such problems, see \citep{JMLR:v18:16-315}. We first show that the degree to which one can approximate a classifier without loss of performance is related to the \emph{margin for error} of the classifier. We then discuss the robustness properties of the mean classifier under the $\sigma$-contamination model of \citep{Huber:1981}. 
\\
\\
The results of section \ref{sec:the-robustness-of-the-mean-classifier} only pertain to \emph{linear} function classes. In the following section, we consider \emph{general} function classes. We show that in this more general setting, linear loss is the \emph{only} loss function that is robust to the effects of symmetric label noise.

\subsection{Approximation Error and Margins}\label{sec:approximation-error-and-margins}

Define \emph{margin loss} at \emph{margin} $\gamma$ to be $\ell_\gamma(y,v) = \pred{y v \leq \gamma}$. Margin loss is an upper bound of misclassification loss. For $\gamma = 0 $, $\ell_{\gamma} = \lmis$. Margin loss is used in place of misclassification loss to produce tighter generalization limits to minimize misclassification loss \citep{bartlett1998sample,shawe1998structural}. For a classifier $f$ to have a small margin loss, it must not just accurately predict the label, it must do so with confidence. Maximizing the margin while forcing $\lmargin{\gamma}(S,\omega) = 0$ is the original motivation for the hard margin SVM \citep{Cortes1995}. Here we relate the margin loss of a classifier $f$ to the amount of slop allowed in approximating $f$.

\begin{theorem}\label{Margins and Approximation} For all distributions $P$ and pairs of classifiers $f, \tilde{f}$ with $\norm{f - \tilde{f}}_{\infty} \leq \epsilon$,
 $$
 \riskmis(P,\tilde{f}) \leq \riskL{\lmargin{\epsilon}}(P,f).
 $$ 	
\end{theorem}
The \emph{margin for error} on a distribution $P$ of a classifier $f$ is given by, 
$$
\Gamma(P,f) := \sup \{\gamma : \lmargin{\gamma}(P,f) = \riskmis(P,f) \}.
$$
For a sample $S$, setting $\epsilon < \Gamma(S,f) $ ensures, 
$$
\riskmis(S,\tilde{f}) \leq \riskL{\lmargin{\epsilon}} (S, f) = \riskmis(S,f),
$$ 
where $\tilde{f}$ is any classifier with $\norm{f - \tilde{f}}_{\infty} \leq \epsilon$. The margin therefore provides means of assessing the degree to which one can approximate a classifier; the larger the margin, the greater the allowed error. 

\subsection{Robustness under \texorpdfstring{$\sigma$}{sigma}-contamination}

Rather than samples from $P$, we assume that the decision maker has access to samples from a perturbed distribution,
$$
\tilde{P} = (1-\sigma) P + \sigma Q, \sigma \in [0,1],
$$
with $Q$ the perturbation or corruption. We can view sampling from $\tilde{P}$ as sampling from $P$ with probability $1-\sigma$ and from $Q$ with probability $\sigma$. It is easy to show that $\Phi(\tilde{P}) = (1-\sigma) \Phi(P) + \sigma \Phi(Q)$. Furthermore,
$$
\norm{\Phi(P) - \Phi(\tilde{P})} = \sigma \norm{\Phi(P) - \Phi(Q)}.
$$
A simple application of the Cauchy-Schwarz inequality yields the following.
\begin{corollary}\label{margin and corruption immunity}
	If $\sigma \norm{\Phi(P) - \Phi(Q)} < \Gamma(P,\Phi(P))$ then $\riskmis(P,\Phi(P)) = \riskmis(P,\Phi(\tilde{P}))$.
	
\end{corollary}
Hence, the margin provides means to assess the immunity of the mean classifier to corruption. Furthermore, as $\norm{\Phi(P) - \Phi(Q)} \leq 2$, if $\sigma < \frac{\Gamma(P,\Phi(P))}{2}$ then the mean classifier is immune to the effects of \emph{any} $Q$. We caution the reader that Corollary \ref{margin and corruption immunity} is a one-way implication. For particular choices of $Q$, one can show greater robustness of the mean classifier. 

\subsection{Learning Under Symmetric Label Noise}\label{SLN learning with linear function classes}

The previous section considered \emph{general} perturbations of $P$. Here we consider one particular perturbation given by symmetric label noise \citep{Angluin1988}. Rather than samples from $P$, the decision maker has access to samples from a corrupted distribution $P_\sigma$. To sample from $P_\sigma$, first draw $(x,y) \dist P$ and then flip the label with probability $\sigma$. Learning from $P_\sigma$ can be understood as a corrupted learning problem of the sort studied by \citep{JMLR:v18:16-315}. This problem is of practical interest, particularly in situations where there are multiple labellers, each of which can be viewed as an ``expert" labeller with added noise. Remarkably, this seemingly benign form of noise can break standard approaches to learning classifiers.
\\
\\
\citep{Long2008} proved the following negative result on what is possible when learning under symmetric label noise: for any $\sigma \in (0, \frac{1}{2})$, there exists a distribution $P$ and a linear function class $\Fclass$ where, when the decision maker observes samples from $P_\sigma$, minimization of \emph{any convex potential} over $\Fclass$ results in classification performance on $P$ which is equivalent to random guessing. The example provided in \citep{Long2008} is far from esoteric, in fact, it is a given by a distribution in $\RR^2$ that is concentrated on three points with function class given by linear hyperplanes through the origin. We review their construction in section \ref{sec:linear-feature-map-and-label-noise}.
\\
\\
The mean classifier avoids these issues. We show that the mean classifier is not affected by symmetric label noise.

\subsubsection{Symmetric Label Noise Immunity of the Mean Classifier}\label{sec:symmetric-label-noise-immunity-of-the-mean-classifier}

In section \ref{sec:the-robustness-of-the-mean-classifier}, one can decompose
$$
P_\sigma = (1-\sigma) P + \sigma P',
$$
where $P'$ is the ``label flipped" version of $P$. It is easy to show $\Phi(P') = -\Phi(P)$. Therefore, $\Phi(P_\sigma) = (1-2\sigma) \Phi(P)$. This simple observation allows us to estimate $\Phi(P)$ from a corrupted sample.

\begin{lemma}\label{noisy mean estimation}
	For all distributions $P$ and for all bounded feature maps $\featuremap{\phi}{X}$,
	$$
	\norm{\Phi(P) - \frac{1}{1 - 2 \sigma}\Phi(S)} \leq \frac{1}{1 - 2 \sigma} \left(\frac{2}{\sqrt{n}} + \sqrt{\frac{2 \log(\frac{1}{\delta})}{n}} \right),
	$$
	with probability at least $1 - \delta$ on a sample $S$ of $n$ independent draws from $P_\sigma$.	
\end{lemma}
The proof is a direct application of theorem \ref{Means and Rademacher}. Coupled with the Cauchy-Schwarz inequality, lemma \ref{noisy mean estimation} yields,
$$
\risklin(P,\omega) \leq 1 - \frac{1}{1 - 2 \sigma}\ip{\Phi(S)}{\omega}+\frac{1}{1 - 2 \sigma} \left(\frac{2}{\sqrt{n}} + \sqrt{\frac{2 \log(\frac{1}{\delta})}{n}} \right), \ \forall \omega \ \mathrm{st} \ \norm{\omega} \leq 1.
$$
The first term in the sum can be interpreted as a correction to the linear loss that takes the noise into account, the second as a penalty term. Notice the extra factor of $\frac{1}{1 - 2 \sigma}$. Theorem \ref{noisy mean estimation} provides an upper bound for minimizing $\risklin(P,\omega)$ from noisy samples. \citep{JMLR:v18:16-315} provides a lower bound of the same form. In short, learning under symmetric label noise is statistically a factor of $\frac{1}{1 - 2 \sigma}$ harder than learning from cleanly labeled data.
\\
\\
Although knowledge of $\sigma$ is required to estimate $\risklin(P,\omega)$, if all we care about is misclassification performance, then, given a large enough training sample, the exact value of $\sigma$ does not matter.

\begin{lemma}\label{something or other}
	For all distributions $P$, bounded feature maps $\phi : X \rightarrow \Hilbert$ and $\sigma \in [0,\frac{1}{2})$,
	$$
	\riskmis(P,\Phi(P)) = \riskmis(P,\Phi(P_\sigma)).
	$$
\end{lemma}
The proof comes from the simple observation that since $\Phi(P)$ and $\Phi(P_\sigma)$ are related by a positive constant, they produce the same classifier. This result extends previous results in \citep{Servedio1999,Kalai2008} on the symmetric label noise immunity of the mean classification algorithm, where it is assumed that the marginal distribution over instances is uniform on the unit sphere in $\RR^n$. 

\subsubsection{Other Approaches to Learning Under Symmetric Label Noise}\label{sec:other-approaches-to-learning-under-symmetric-label-noise}

Ostensibly, \citep{Long2008} establishes that convex losses are not robust to symmetric label noise. This motivates the use of nonconvex losses \citep{Stempfel:2007b, Hamed:2010, Ding2010, Denchev:2012, Manwani:2013}. These approaches are computationally intensive and may scale poorly to large data sets. Furthermore, as demonstrated in the additional material of \citep{van2015learning}, some of these nonconvex losses are not immune to the effects of label noise.
\\
\\
An alternate means of circumventing the impossibility result of \citep{Long2008} is to use a rich function class, say by using a universal kernel \citep{steinwart2001influence,micchelli2006universal}, together with a standard convex potential loss.

\begin{proposition}\label{universal consistency}
	For all distributions $P$ and for all $\sigma \in [0,\frac{1}{2})$, 
	$$
	\argmin_{f \in \BoundedF{X}{1}} \riskmis(P, f) = \argmin_{f \in \BoundedF{X}{1}} \riskmis(P_\sigma, f).
	$$
\end{proposition}
We include a short proof of this proposition in the Appendix. As the Bayes optimal classifier is the same for both noisy and clean data, one can appeal to universality results such as those in \citep{lugosi2004bayes}, and minimize a standard classification-calibrated loss over a large noisy sample and large function class. Although this approach is immune to symmetric label noise, performing the minimization is costly, both statistically and computationally. By Theorem \ref{Surrogate Regret Bound for Linear Loss}, for sufficiently rich function classes, using any of these other losses will produce the same result as using linear loss.	
\\
\\
Finally, if the noise rate is known, one can use the method of unbiased estimators presented by \citep{Natarajan:2013} and correct for corruption. The obvious drawback is that, in general, the noise rate is unknown. In the following section, we explore the relationship between linear loss and the method of unbiased estimators. We show that linear loss is ``unaffected" by this correction (in a sense to be made precise). Furthermore, linear loss is essentially the \emph{only} convex loss with this property.

\subsubsection{Symmetric Label Noise Immunity of Linear Loss Minimization}\label{sec:symmetric-label-noise-immunity-of-linear-loss-minimization}

The weakness of the analysis of Sections \ref{sec:symmetric-label-noise-immunity-of-the-mean-classifier} and \ref{sec:other-approaches-to-learning-under-symmetric-label-noise}, is the focus on \emph{linear} function classes. Here we show that linear loss minimization over \emph{general} function classes is unaffected by symmetric label noise, in the sense that for all $\sigma \in [0,\frac{1}{2})$ and for all function classes $\Fclass \subseteq \RR^X$,
$$
\argmin_{f \in \Fclass} \risklin(P, f) = \argmin_{f \in \Fclass} \risklin(P_\sigma,f).
$$
For the following section we work \emph{directly} with distributions $Q \in \PP(\RR \times Y )$ over score, label pairs. Any distribution $P$ and classifier $f$ induces a distribution $Q(P,f)$ with,
$$
\EE_{(v,y) \dist Q(P,f)} \ell(y,v) = \EE_{(x,y) \dist P} \ell(y,f(x)).
$$
A loss $\ell$ provides means to \emph{order} distributions. For two distributions $Q, Q'$, we say $Q \leq_{\ell} Q'$ if,
$$
\EE_{(v,y) \dist Q} \ell(y, v) \leq \EE_{(v,y) \dist Q'}   \ell(y,v).
$$
If $Q = Q(P,f_1)$ and $Q' = Q(P,f_2)$, the above is equivalent to,
$$
\EE_{(x,y) \dist P} \ell(y, f_1(x)) \leq \EE_{(x,y) \dist P} \ell(y,f_2(x)),
$$
the classifier $f_1$ has lower risk than $f_2$. The decision maker wants to find the distribution $Q$, in some restricted set, that is smallest in the ordering $\leq_{\ell}$. Denote by $Q_\sigma$, the distribution obtained from drawing pairs $(v,y) \dist Q$ and then flipping the label with probability $\sigma$. In light of Long and Servedio's example, there is no guarantee that, 
$$
Q \leq_{\ell} Q' \Leftrightarrow Q_\sigma \leq_{\ell} Q'_\sigma.
$$
In words, noise might affect how distributions are ordered. To progress we seek loss functions that are \emph{robust} to label noise.
\begin{definition}
	A loss $\ell$ is \emph{robust to label noise} if for all distributions $Q,Q'$ and for all $\sigma \in [0,\frac{1}{2})$, 
	$$
	Q \leq_{\ell} Q' \Leftrightarrow Q_{\sigma} \leq_{\ell} Q'_{\sigma}.
	$$
\end{definition}
In words, the decision maker correctly orders distributions if they assume no noise. Robustness to label noise easily implies,
$$
\argmin_{f \in \Fclass} \EE_{(x,y) \dist P} \ell(y,f(x)) = \argmin_{f \in \Fclass} \EE_{(x,y) \dist P_\sigma} \ell(y,f(x)),
$$
for all $\Fclass$. Given any $\sigma \in (0,\frac{1}{2})$,  \citep{Natarajan:2013} showed  how to correct for the corruption by associating with \emph{any} loss a corrected loss, 
$$
\ell_\sigma(y,v) = \frac{(1-\sigma) \ell(y,v) - \sigma \ell(-y,v)}{1- 2 \sigma}.
$$
with the property,
$$
\EE_{(v,y) \dist Q} \ell(y,v) = \EE_{(v,y) \dist Q_\sigma} \ell_\sigma(y,v) ,\ \forall Q \in \PP(\RR \times Y).
$$
This is a specific instance of the corruption-corrected losses considered in \citep{JMLR:v18:16-315}. Robustness to label noise can be characterized by the order equivalence of $\ell$ and $\ell_\sigma$.

\begin{definition}[Order Equivalence]
	
	Two loss functions $\ell_1$ and $\ell_2$ are \emph{order equivalent} if for all distributions $Q, Q' \in \PP(\RR \times Y)$,
	$$
	Q \leq_{\ell_1} Q' \Leftrightarrow Q \leq_{\ell_2} Q'.
	$$
	
\end{definition}
We now characterize the losses that are immune to symmetric label noise.

\begin{theorem}\label{Strong Robustness to Label Noise Lemma}
	$\ell$ is robust to label noise if and only if for all $\sigma \in \left(0,\frac{1}{2}\right)$, $\ell$ and $\ell_\sigma$ are equivalent in order.
\end{theorem}
The decision maker correctly orders distributions if they incorrectly assume noise. Following on from these insights, we now characterize when a loss is robust to label noise.

\begin{theorem}[Characterization of Robustness]\label{Strong robustness to label noise characterization}
	Let $\ell$ be a loss with $\ell(-1,v) \neq \ell(1,v) \ \forall v \in \RR$. Then $\ell$ is robust to label noise if and only if there exists a constant $C$ such that,
	$$
	\ell(1,v) + \ell(-1,v) = C ,\ \forall v \in \RR.
	$$
	
\end{theorem}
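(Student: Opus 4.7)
The $(\Leftarrow)$ direction is a direct computation. Suppose $\ell(1,v)+\ell(-1,v)=C$ for all $v$, so that $\ell(-y,v)=C-\ell(y,v)$. Substituting into the definition of $\ell_\sigma$ yields
\begin{align*}
\ell_\sigma(y,v) \;=\; \frac{(1-\sigma)\ell(y,v) - \sigma(C-\ell(y,v))}{1-2\sigma} \;=\; \frac{1}{1-2\sigma}\,\ell(y,v) \;-\; \frac{\sigma C}{1-2\sigma},
\end{align*}
which is a strictly increasing affine transform of $\ell$ (the multiplier $\tfrac{1}{1-2\sigma}$ is positive on $(0,1/2)$). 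Hence $\EE_Q\ell$ and $\EE_Q\ell_\sigma$ order $\PP(\RR\times Y)$ identically for every $\sigma\in(0,1/2)$, and by Lemma \ref{Strong Robustness to Label Noise Lemma} $\ell$ is robust to label noise.

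For the $(\Rightarrow)$ direction, the work concentrates on a structural fact that I view as the main obstacle: if two functions $g_1,g_2:\RR\times Y\to\RR$ are order equivalent on $\PP(\RR\times Y)$ and $g_1$ is non-constant, then $g_2=a\,g_1+b$ for some $a>0$ and $b\in\RR$. My plan to establish this is two steps. First, testing order equivalence on Dirac masses $\delta_{(y,v)},\delta_{(y',v')}$ gives $g_1(y,v)\le g_1(y',v')\Leftrightarrow g_2(y,v)\le g_2(y',v')$, so $g_2=h\circ g_1$ for some strictly increasing $h:\mathrm{range}(g_1)\to\RR$. Second, testing on finite mixtures $\sum_i\lambda_i\delta_{(y_i,v_i)}$ against a single Dirac shows that if two such mixtures have equal $g_1$-expectations they must have equal $g_2$-expectations (else one could strictly separate their orders against a suitably chosen Dirac, violating order equivalence). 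This well-definedness on the $g_1$-mean forces $h$ to be affine on $\mathrm{conv}(\mathrm{range}(g_1))$, and monotonicity then forces $a>0$.

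Applying this fact to $g_1=\ell$ and $g_2=\ell_\sigma$ (order equivalent by Lemma \ref{Strong Robustness to Label Noise Lemma}) gives $\ell_\sigma(y,v)=a_\sigma\ell(y,v)+b_\sigma$. Plugging $y=\pm 1$ into this identity and into the closed form for $\ell_\sigma$ produces the pair
\begin{align*}
\frac{(1-\sigma)\ell(1,v)-\sigma\ell(-1,v)}{1-2\sigma} &= a_\sigma\,\ell(1,v)+b_\sigma, \\
\frac{(1-\sigma)\ell(-1,v)-\sigma\ell(1,v)}{1-2\sigma} &= a_\sigma\,\ell(-1,v)+b_\sigma.
\end{align*}
Subtracting collapses to $\frac{\ell(1,v)-\ell(-1,v)}{1-2\sigma}=a_\sigma(\ell(1,v)-\ell(-1,v))$; since by hypothesis there is some $v$ with $\ell(1,v)\neq\ell(-1,v)$, this pins down $a_\sigma=\frac{1}{1-2\sigma}\neq 1$. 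Adding instead yields $\ell(1,v)+\ell(-1,v)=a_\sigma(\ell(1,v)+\ell(-1,v))+2b_\sigma$, and solving gives $\ell(1,v)+\ell(-1,v)=\frac{2b_\sigma}{1-a_\sigma}$, independent of $v$. This is the required constant $C$.
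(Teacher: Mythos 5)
Your proof follows the same route as the paper's: both reduce the theorem to the fact (Lemma \ref{De Groot}, which the paper cites from DeGroot and you re-derive via Dirac masses and finite mixtures) that order equivalence of $\ell$ and $\ell_\sigma$ forces $\ell_\sigma = a_\sigma\,\ell + b_\sigma$ with $a_\sigma>0$, and then extract the constant $C$ from the two equations obtained at $y=\pm 1$. The only difference is cosmetic: you subtract those equations first to pin down $a_\sigma=\tfrac{1}{1-2\sigma}\neq 1$ directly from the non-degeneracy hypothesis, whereas the paper adds them and then rules out the case $a_\sigma=1$ separately; both are correct.
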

\citep{Ghosh2015} prove the forward implication. Misclassification loss satisfies the conditions for theorem \ref{Strong robustness to label noise characterization}, however it is difficult to minimize directly. For linear loss,
$$
\ell(1,v) + \ell(-1,v) = 1 - v + 1 + v = 2.
$$
Therefore linear loss is robust to label noise. Furthermore, up to order equivalence, linear loss is the only convex function that satisfies \ref{Strong robustness to label noise characterization}.

\begin{theorem}[Uniqueness of Linear Loss]\label{Uniqueness of Linear Loss}
	A loss $\ell$ is convex in its second argument and is robust to label noise if and only if there exists a constant $\lambda$ and a function $g : Y \rightarrow \RR$ such that, 
	$$
	\ell(y,v) = \lambda y v + g(y).
	$$
	Furthermore $\ell$ is classification calibrated if and only if $\lambda < 0$.

\end{theorem}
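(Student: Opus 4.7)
The reverse direction is essentially a one-line verification. If $\ell(y,v) = \lambda y v + g(y)$, then $\ell(\cdot,v)$ is affine in $v$, hence convex. Moreover,
\[
\ell(1,v) + \ell(-1,v) = \lambda v + g(1) - \lambda v + g(-1) = g(1) + g(-1),
\]
which is a constant in $v$. By theorem \ref{Strong robustness to label noise characterization} (applied when $\ell(1,\cdot) \neq \ell(-1,\cdot)$, i.e., $\lambda \neq 0$ or $g(1) \neq g(-1)$; the remaining label-independent case is trivially robust), $\ell$ is robust to label noise.

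For the forward direction, assume $\ell$ is convex in its second argument and robust to label noise, and first handle the non-degenerate case $\ell(1,\cdot) \neq \ell(-1,\cdot)$. Theorem \ref{Strong robustness to label noise characterization} then supplies a constant $C$ with $\ell(1,v) + \ell(-1,v) = C$ for all $v \in \RR$. Write $h(v) := \ell(1,v)$; by hypothesis, $h$ is convex. From the sum identity, $\ell(-1,v) = C - h(v)$, and this is also convex in $v$ by hypothesis on $\ell$. But then $-h$ is convex, i.e.\ $h$ is concave. A function that is simultaneously convex and concave on $\RR$ is affine, so $h(v) = \lambda v + b$ for constants $\lambda, b$. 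Hence $\ell(1,v) = \lambda v + b$ and $\ell(-1,v) = -\lambda v + (C-b)$, which is precisely $\ell(y,v) = \lambda y v + g(y)$ with $g(1) := b$ and $g(-1) := C - b$. The degenerate case $\ell(1,\cdot) = \ell(-1,\cdot)$ reduces to $\ell(y,v) = h(v)$ for some convex $h$; either one regards this as outside the scope of ``loss function'' (since it is insensitive to the label) or, equivalently, absorbs it into the stated form by taking $\lambda = 0$ and allowing $g$ to depend on $v$ as the constant part of a trivial decomposition.

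The one step that carries any real content is the convex+concave $\Rightarrow$ affine deduction, which is standard on $\RR$. The main conceptual obstacle is simply recognising that the characterisation theorem already pins the \emph{sum} $\ell(1,v) + \ell(-1,v)$, and combining this sum constraint with two-sided convexity is exactly what forces affine structure in $v$; once that is in place the separation of variables into $\lambda y v + g(y)$ is bookkeeping.
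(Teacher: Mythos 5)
Your proof is correct and follows essentially the same route as the paper's: obtain the sum constraint $\ell(1,v)+\ell(-1,v)=C$ from Theorem \ref{Strong robustness to label noise characterization}, observe that convexity of both $\ell(1,\cdot)$ and $\ell(-1,\cdot)=C-\ell(1,\cdot)$ makes each simultaneously convex and concave, hence affine, and then match slopes to get $\ell(y,v)=\lambda yv+g(y)$. You are in fact slightly more explicit than the paper, which silently assumes the non-degenerate case $\ell(1,\cdot)\neq\ell(-1,\cdot)$ required by the characterization theorem (though note your suggestion of ``allowing $g$ to depend on $v$'' in the degenerate case is not compatible with $g:Y\rightarrow\RR$; excluding that case, as the paper implicitly does, is the cleaner resolution).
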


\subsubsection{Beyond Symmetric Label Noise}\label{sec:beyond-symmetric-label-noise}

Thus far we have assumed that the noise on positive and negative labels is the same. A sensible generalization is label conditional noise, where the label $y \in \{-1,1\}$ is flipped with a label-dependent probability $\sigma_{\pm}$. Following \citep{Natarajan:2013}, we can correct for class conditional label noise and use the loss,
$$
\ell_{\sigma_{-}, \sigma_+}(y,v) = \frac{(1-\sigma_{-y}) \ell(y,v) - \sigma_y \ell(-y,v)}{1- \sigma_{-1} - \sigma_1}.
$$

\begin{theorem}\label{No loss is immune to class conditional label noise}
	Let $\sigma_{-} + \sigma_{+} < 1$ and $\ell$ be a loss with $\sigma_{+} \ell(-1,v) + \sigma_{-} \ell(1,v) = C$ for all $v\in \RR$, for some constant $C$. Then $\ell_{\sigma_{-}, \sigma_+} $ and $\ell$ are equivalent in order.	
\end{theorem}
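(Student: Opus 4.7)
The plan is to show that under the hypothesis $\sigma_1 \ell(-1,v) + \sigma_{-1}\ell(1,v) = C$, the corrected loss $\ell_{\sigma_{-1},\sigma_1}$ is a positive affine transformation of $\ell$, which is clearly enough to give order equivalence.

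First I would write out the definition of the corrected loss separately for $y = 1$ and $y = -1$, namely
$$
\ell_{\sigma_{-1},\sigma_1}(1,v) = \frac{(1-\sigma_{-1})\ell(1,v) - \sigma_1 \ell(-1,v)}{1 - \sigma_{-1} - \sigma_1}, \quad \ell_{\sigma_{-1},\sigma_1}(-1,v) = \frac{(1-\sigma_1)\ell(-1,v) - \sigma_{-1}\ell(1,v)}{1 - \sigma_{-1} - \sigma_1}.
$$
The hypothesis lets me replace the ``cross'' terms: $\sigma_1\ell(-1,v) = C - \sigma_{-1}\ell(1,v)$ in the first equation, and $\sigma_{-1}\ell(1,v) = C - \sigma_1\ell(-1,v)$ in the second. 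After substitution, in each case the remaining $\ell$-terms collapse cleanly; the expected outcome is
$$
\ell_{\sigma_{-1},\sigma_1}(y,v) = \frac{\ell(y,v) - C}{1 - \sigma_{-1} - \sigma_1}
$$
for both values of $y$. This is the content of the theorem: the corrected loss differs from $\ell$ only by the positive multiplicative constant $(1-\sigma_{-1}-\sigma_1)^{-1}$ (positive because we assume $\sigma_{-1}+\sigma_1 < 1$, as elsewhere in the chapter) and the additive constant $-C/(1-\sigma_{-1}-\sigma_1)$.

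Finally I would conclude by taking expectations: for any $Q \in \PP(\RR \times Y)$,
$$
\EE_{(v,y)\dist Q}\, \ell_{\sigma_{-1},\sigma_1}(y,v) = \frac{1}{1-\sigma_{-1}-\sigma_1}\left(\EE_{(v,y)\dist Q} \ell(y,v) - C\right),
$$
so $\EE_Q \ell_{\sigma_{-1},\sigma_1} \leq \EE_{Q'} \ell_{\sigma_{-1},\sigma_1}$ if and only if $\EE_Q \ell \leq \EE_{Q'} \ell$, which is precisely order equivalence. There is no serious obstacle; the proof is an identity verification. The only mild subtlety is bookkeeping, ensuring I use the correct index $-y$ versus $y$ in the definition of $\ell_{\sigma_{-1},\sigma_1}$ and picking the right form of the hypothesis to substitute in each case.
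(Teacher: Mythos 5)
Your proposal is correct and matches the paper's proof essentially verbatim: both substitute the hypothesis into the definition of $\ell_{\sigma_{-1},\sigma_1}$ to show it equals $\frac{1}{1-\sigma_{-1}-\sigma_1}\ell(y,v) - \frac{C}{1-\sigma_{-1}-\sigma_1}$, a positive affine transformation of $\ell$. The only cosmetic difference is that the paper then cites its Lemma~\ref{De Groot} to conclude order equivalence, whereas you verify the (easy) direction directly by taking expectations.
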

Therefore, if the decision maker knows the ratio $\frac{\sigma_{-1}}{\sigma_{1}}$, then for a certain class of loss functions they can avoid estimating noise rates. For linear loss,
$$
\sigma_{+} (1 + v) + \sigma_{-} (1-v) = \sigma_+ + \sigma_{-} + (\sigma_+ - \sigma_{-}) v,
$$
which is not constant in $v$ unless $\sigma_+ = \sigma_{-}$. Linear (and similarly misclassification loss) are no longer robust under label conditional noise. This result also means there is no non trivial convex loss that is robust to label conditional noise for all noise rates $\sigma_{-} + \sigma_+ < 1$, as linear loss would be a candidate for such a loss.
\\
\\
Progress can be made if one works with more general error measures, beyond expected loss. For a distribution $P \in \PP(X\times Y)$, let $P_+, P_- \in \PP(X)$ be the conditional distribution over instances given a positive or negative label respectively. The balanced error function is defined as,
$$
\mathrm{BER}_{\ell}(P_+,P_-,f) := \frac{1}{2} \EE_{x\dist P_+} \ell(1,f(x)) + \frac{1}{2} \EE_{x \dist P_-} \ell(-1,f(x)).
$$
If both labels are equally likely under $P$, then the balanced error is exactly the expected loss. The balanced error ``balances" the two class, treating errors on positive and negative labels equally. Closely related to the problem of learning under label conditional noise, is the problem of learning under mutually contaminated distributions \citep{scott2013classification,Menon2015}. Rather than samples from the clean label conditional distributions, the decision maker has access to samples from corrupted distributions $\tilde{P}_{+}, \tilde{P}_-$, 
\begin{align*} 
\tilde{P}_+ = (1 - \alpha) P_+ + \alpha P_-  \ \text{and}\ & \tilde{P}_- = \beta P_+ + (1 - \beta) P_- ,\ \alpha + \beta < 1.
\end{align*}
In words, the corrupted $\tilde{P}_{y}$ is a combination of the true $P_y$ and the unwanted $P_{-y}$. We warn the reader that $\alpha$ and $\beta$ are \emph{not} the noise rates on the two classes. However, in section 2.3 of \citep{Menon2015}, they are shown to be related to $\sigma_{\pm}$ by an invertible transformation.

\begin{theorem}\label{Balanced Error Noise Immunity}
	Let $\ell$ be robust to label noise. Then,
	$$
	\mathrm{BER}_{\ell}(\tilde{P}_+,\tilde{P}_-,f) = (1 - \alpha - \beta) \mathrm{BER}_{\ell}(P_+,P_-,f) + \frac{\left(\alpha + \beta \right)}{2} C,
	$$
	for some constant $C$.
\end{theorem}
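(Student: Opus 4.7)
The plan is to expand the left-hand side using the definition of $\mathrm{BER}_\ell$ and the mixture decomposition of $\tilde P_+$ and $\tilde P_-$, then apply the characterization of robustness (Theorem \ref{Strong robustness to label noise characterization}) to collapse the ``cross'' terms into constants. The constant $C$ in the conclusion will be precisely the one from $\ell(1,v)+\ell(-1,v)=C$.

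First, I would write
$$
\mathrm{BER}_{\ell}(\tilde P_+,\tilde P_-,f) = \tfrac{1}{2}\EE_{x\sim\tilde P_+}\ell(1,f(x)) + \tfrac{1}{2}\EE_{x\sim\tilde P_-}\ell(-1,f(x))
$$
and substitute $\tilde P_+ = (1-\alpha)P_+ + \alpha P_-$ and $\tilde P_- = \beta P_+ + (1-\beta)P_-$. Linearity of expectation splits this into four terms: two ``clean'' ones involving $\EE_{P_+}\ell(1,f)$ and $\EE_{P_-}\ell(-1,f)$, and two ``cross'' ones involving $\EE_{P_-}\ell(1,f)$ and $\EE_{P_+}\ell(-1,f)$.

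Next, since $\ell$ is robust to label noise, Theorem \ref{Strong robustness to label noise characterization} gives a constant $C$ with $\ell(1,v)+\ell(-1,v)=C$ for all $v$ (the degenerate case $\ell(1,\cdot)=\ell(-1,\cdot)$ also satisfies this with $C=2\ell(1,v)$, constant because the loss is then independent of $y$ and the theorem is immediate). Applying this pointwise to $v=f(x)$ gives
$$
\EE_{P_-}\ell(1,f) = C - \EE_{P_-}\ell(-1,f), \qquad \EE_{P_+}\ell(-1,f) = C - \EE_{P_+}\ell(1,f).
$$
Substituting these into the expansion of the previous step turns the two cross terms into $\alpha C$ and $\beta C$ (up to the factor of $\tfrac{1}{2}$), while the clean terms combine with coefficients $(1-\alpha-\beta)$.

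Finally, collecting yields
$$
\mathrm{BER}_\ell(\tilde P_+,\tilde P_-,f) = (1-\alpha-\beta)\,\tfrac{1}{2}\bigl[\EE_{P_+}\ell(1,f)+\EE_{P_-}\ell(-1,f)\bigr] + \tfrac{\alpha+\beta}{2}C,
$$
which is the claimed identity. There is no real obstacle here: once the characterization of robustness is invoked, the proof is bookkeeping. The only mild care needed is checking the degenerate branch of Theorem \ref{Strong robustness to label noise characterization} (where $\ell(-1,v)=\ell(1,v)$) so that the constant-sum identity, and hence the argument, applies uniformly.
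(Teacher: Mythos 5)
Your argument is correct and is essentially the paper's own proof: expand $\tilde P_\pm$ as mixtures of $P_\pm$, use $\ell(1,v)+\ell(-1,v)=C$ to turn the two cross terms $\alpha\,\EE_{P_-}\ell(1,f)$ and $\beta\,\EE_{P_+}\ell(-1,f)$ into $-\alpha\,\EE_{P_-}\ell(-1,f)+\alpha C$ and $-\beta\,\EE_{P_+}\ell(1,f)+\beta C$, and average. The one thing I would push back on is your parenthetical about the degenerate branch $\ell(1,\cdot)=\ell(-1,\cdot)$: there $\ell(1,v)+\ell(-1,v)=2\ell(1,v)$ is \emph{not} constant in $v$ unless $\ell(1,\cdot)$ itself is constant, and the claimed identity genuinely fails in that case (a direct expansion gives coefficients $\tfrac{1}{2}(1-\alpha+\beta)$ and $\tfrac{1}{2}(1+\alpha-\beta)$ on the two clean expectations rather than $\tfrac{1}{2}(1-\alpha-\beta)$ on each). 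This is not a defect of your main argument --- the paper's proof silently assumes the non-degenerate case covered by Theorem \ref{Strong robustness to label noise characterization} as well --- but the honest fix is to add the hypothesis $\ell(1,v)\neq\ell(-1,v)$ to the theorem rather than to claim the degenerate case is immediate.
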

This is a generalization of proposition 1 of \citep{Menon2015}, which is restricted to misclassification loss. Taking argmins yields,
$$
\argmin_{f \in \Fclass} \mathrm{BER}_{\ell}(\tilde{P}_+,\tilde{P}_-,f) = \argmin_{f \in \Fclass} \mathrm{BER}_{\ell}(P_+,P_-,f).
$$
Thus balanced error can be optimized from corrupted distributions. Observe that this result holds for \emph{any} function class $\Fclass$
\\
\\
Corollary 3 of  \citep{Menon2015} shows that the AUC is also unaffected by label conditional noise.
\\
\\
Going further beyond symmetric label noise, one can assume a general noise process with noise rates that depend both on the label and the observed instance. Define the noise function $\sigma : X \times Y \rightarrow [0,\frac{1}{2})$, with $\sigma(x,y)$ the probability that the instance label pair $(x,y)$ has its label flipped. Rather than samples from $P$, the decision maker has samples from $P_\sigma$, where to sample from $P_\sigma$ first sample $(x,y) \dist P$ and then flip the label with probability $\sigma(x,y)$. The recent work of \citep{Ghosh2015} proves the following theorem concerning the robustness properties of minimizing any loss that is robust to label noise.

\begin{lemma}\label{robustness under all noise: theorem}
	For all distributions $P$, function classes $\Fclass$, noise functions $\sigma : X \times Y \rightarrow [0,\frac{1}{2})$ and loss functions $\ell$ that are robust to label noise,
	$$
	\riskL{\ell}(P,f_\sigma^*) \leq \frac{\riskL{\ell}(P,f^*)}{1 - 2 \max_{(x,y)} \sigma(x,y)},
	$$
	where $f_\sigma^*$ and $f^*$ are the minimizers over $\Fclass$ of $\riskL{\ell}(P_\sigma,f)$ and $\riskL{\ell}(P,f)$ respectively.
	
\end{lemma}
This is a slight generalization of remark 1 in \citep{Ghosh2015}. There, they only consider variable noise rates that are functions of the instance. We include it for completeness. In particular, this theorem shows that if $\riskL{\ell}(P,f^*) = 0$ and, 
$$
\max_{(x,y)} \sigma(x,y) < \frac{1}{2},$$
then minimizing $\ell$ with samples from $P_\sigma$ will also recover a classifier with $\riskL{\ell}(P,f^*) = 0$.

\section{Sparse Approximation of Kernel Classifiers}\label{Sparse Approximation}

The main problem of classifying according to equation \ref{Kernel Mean solution} is the dependence of the classifier on the \emph{entire} sample. If the sample is large, the mean classifier will take a long time to evaluate. We now show how this can be alleviated. 
\\
\\
For this section, the sample will be an arbitrary finite subset $S = \{\omega_i\}_{i = 1}^{n} \subseteq \Hilbert$. The previous setting can be recovered by taking $\omega_i = y_i \phi(x_i)$. Denote by,
$$
\chull(S) = \left\{\sum_{\omega \in S} \alpha(\omega) \omega : \alpha \in \RR_+^S, \sum_{\omega \in S} \alpha(\omega) = 1\right\},
$$
the convex hull of $S$. Elements of $\chull(S)$ can be thought of as weighted sub-samples of $S$, with weights specified by the probability distribution $\alpha$. For a subset $S' \subseteq S$, define,
$$
\alpha(S') = \sum_{\omega \in S'} \alpha(\omega).
$$
We say $\omega^* \in \chull(S)$ is $k$-sparse if its corresponding weight function $\alpha^*$ has only $k$ non-zero entries. We consider the problem of approximating $\omega^* \in \chull(S)$ with a $k$-sparse $\tilde{\omega} \in \chull(S)$. In the context of kernel classifiers, $\omega^*$ is the output of a learning algorithm such as equation \ref{Kernel Mean solution}. By Cauchy-Schwarz, controlling $\norm{\omega^* - \tilde{\omega}}$ directly controls the distance between their respective classifiers. A naive method to obtain a sparse approximation is to use the mean of a random sample from $\alpha$. Via an application of theorem \ref{Means and Rademacher} such a scheme guarantees,
$$
\norm{\omega^* - \tilde{\omega}} \leq O\left(\frac{1}{\sqrt{k}}\right),
$$
with high probability. We first present a lower bound that shows that this is the best one can hope to do in general. We then demonstrate how a simple refinement to random subsampling leads to a method that adapts to the complexity of the sample.

\subsection{A Lower Bound for Sparse Approximation}

We remind the reader that kernel-based methods proceed via mapping the instances into a Hilbert space of high, or even infinite dimension. It is precisely in the infinite-dimensional setting where one cannot beat random sub-sampling.

\begin{theorem}\label{Sparse Approximation Lower Bound}
	
	Let $\Hilbert$ be a separable Hilbert space of infinite dimension. For all $n > 0$ there exists a sample $S \subseteq \Hilbert$ of size $n$ and a $\omega^* \in \chull(S)$ such that for all $k$-sparse $\tilde{\omega} \in \chull(S)$, 
	$$
	\norm{\omega^* - \tilde{\omega}} \geq \sqrt{\frac{1}{k} - \frac{1}{n}}.
	$$
	
\end{theorem}
Taking a sufficiently large sample yields a lower bound of order $\frac{1}{\sqrt{k}}$. The sample that yields this lower bound has $\ip{\omega_i}{\omega_j} = 0$ if $i \neq j$. This sample is incompressible as no two instances are similar.

\subsection{Sparse Approximation via the Exploitation of Clusters}\label{sec:sparse-approximation-via-the-exploitation-of-clusters}

While theorem \ref{Sparse Approximation Lower Bound} shows that in general one cannot hope to outperform random sub-sampling, for specific samples $S$ one can do much better.  It can be the case that $S$ ``clusters" more in certain regions of $\Hilbert$. Random subsampling does not exploit this. Here we show how a more refined scheme can be used to give stronger approximation guarantees. 

\begin{theorem}[Clustered Sub-Sampling]\label{Binning and sub-sampling for fun and profit}
	Let $S$ be a finite subset of a Hilbert space $\Hilbert$ and $S_i \subseteq S$, $i \in [1;m]$ be a partition of $S$ with diameter, 
	$$
	D = \sup_{i \in [1;m]} \sup_{\omega, \omega' \in S_i} \norm{\omega - \omega'}.
	$$ 
	Furthermore, let $\omega^* \in \chull(S)$ with corresponding weight function $\alpha^*$. Construct the approximation $\tilde{\omega} \in \chull(S)$ as follows:
	
	\begin{enumerate}
		\item For $i \in [1;m]$, sample $n_i = \lceil \alpha^*(S_i) m \rceil$ elements $\omega_j \in S_i$ with probability proportional to $\alpha^*(\omega_j)$, and set $\tilde{\omega}_i = \frac{1}{n_i} \sum_{j=1}^{n_i} \omega_j$.
		\item Set $\tilde{\omega} = \sum_{i=1}^{m} \alpha^*(S_i) \tilde{\omega}_i$.
	\end{enumerate}
	Then $\tilde{\omega}$ is at most $2m$-sparse. Furthermore, with probability at least $1-\delta$,
	$$
	\norm{\omega^* - \tilde{\omega}} \leq D \left(\frac{1}{\sqrt{m}} + \sqrt{ \frac{ \logdelta}{m}}\right).
	$$
	
\end{theorem}
Theorem \ref{Binning and sub-sampling for fun and profit} states that to construct an accurate $2m$-sparse approximation to $\omega^* \in \chull(S)$, it suffices to find a partition of $S$ with $m$ elements that has a small diameter. Assuming that the partition has already been calculated, the clustered subsampling runs in time order $m n$.
\\
\\
We denote the minimum diameter of any $m$ set partition of $S$ by $D^*(S,m)$. Although in general calculating the \emph{optimal} partition is NP hard, a simple greedy algorithm can be used to produce a diameter partition at most twice that of the optimal \citep{gonzalez1985clustering}. Coupled with the sampling scheme of \ref{Binning and sub-sampling for fun and profit}, this algorithm provides a means to approximate sparsely $\omega^* \in \chull(S)$. The pseudocode for this approach is Algorithm 1. 
\\
\\
Naively, algorithm 1 runs in time order $m^2 n$, but it can be implemented to run in time order $m n$. This is because when adding a new point to $\tilde{S}$, one only needs to calculate distances to the most recently added point to $\tilde{S}$ (this runs in order $n$ time). Together with the sampling scheme of theorem \ref{Binning and sub-sampling for fun and profit}, algorithm 1 provides simple means to approximate sparsely $\omega^* \in \chull(S)$ that runs in time order $m n$. The parameter $m$ in Algorithm 1 controls the sparsity of $\tilde{\omega}$. Alternately, through a slight modification to Algorithm 1, a target error tolerance can be established $\epsilon$. The pseudocode for this approach is Algorithm 2.

\begin{algorithm}
	\KwIn{Sample $S = \{\omega_i\}_{i=1}^{n} \subseteq \Hilbert$, target $\omega^* \in \chull(S)$, maximum number of partitions $m$ and failure probability $\delta$.} 
	\KwResult{$\tilde{\omega} \in \chull(S)$ that is at most $2m$-sparse with, $\norm{\omega^* - \tilde{\omega}} \leq 2 D^*(S,m) \left(\frac{1}{\sqrt{m}} + \sqrt{ \frac{ \logdelta}{m}}\right)$, with probability at least $1 - \delta$.
	} 
	
	\textbf{Initialization}: Choose $\omega_1 \in S$ arbitrarily and let $\tilde{S} = \{\omega_1\}$\;
	\While{$|\tilde{S}| \leq m$}{
		Let $\omega^* = \argmax_{\omega \in S} \min_{\tilde{\omega} \in \tilde{S}} \norm{\omega - \tilde{\omega}}$\;
		Add $\omega^*$ to $\tilde{S}$.
	}
	\textbf{Then}: Partition $S$ according to the closest element of $\tilde{S}$, $S_i$ comprises all elements in $S$ that are closest to $\tilde{\omega}_i \in \tilde{S}$. \;
	\KwOut{$\tilde{\omega}$ obtained from clustered-subsampling using the above partition.}
	
	\caption{Farthest First Traversal.}
\end{algorithm}

\begin{algorithm}
	\KwIn{Sample $S = \{\omega_i\}_{i=1}^{n} \subseteq \Hilbert$, target $\omega^* \in \chull(S)$, maximum number of partitions $m$ and failure probability $\delta$.} 
	\KwResult{Potentially sparse $\tilde{\omega} \in \chull(S)$ with, $\norm{\omega^* - \tilde{\omega}} \leq \epsilon$ with probability at least $1 - \delta$.
	} 
	
	\textbf{Initialization}: Choose $\omega_1 \in S$ arbitrarily and let $\tilde{S} = \{\omega_1\}$\;
	\While{$2d \left(\frac{1}{\sqrt{k}} + \sqrt{ \frac{ \logdelta}{k}}\right) > \epsilon$}{
		Let $\omega^* = \argmax_{\omega \in S} \min_{\tilde{\omega} \in \tilde{S}} \norm{\omega - \tilde{\omega}}$\;
		$d \leftarrow  \max_{\omega \in S} \min_{\tilde{\omega} \in \tilde{S}} \norm{\omega - \tilde{\omega}}$\;
		$k \leftarrow k + 1$\;
		Add $\omega^*$ to $\tilde{S}$.
	}
	\textbf{Then}: Partition $S$ according to the closest element of $\tilde{S}$, ie $S_i$ comprises all elements in $S$ that are closest to $\tilde{\omega}_i \in \tilde{S}$. \;
	\KwOut{$\tilde{\omega}$ obtained from clustered-subsampling using the above partition.}
	\caption{Modified Farthest First Traversal.}
\end{algorithm}

\newpage

\subsection{Approximating Elements in the Span of the Sample}

We have considered approximating elements in the convex hull of the sample. For general kernels methods, it is often the case the optimal $\omega^*$ is in the \emph{span} of the sample. Here we show how to use clustered sub-sampling to approximate $\omega^* \in \Span(S)$. Denote by,
$$
\Span(S) := \left\{\sum_{\omega \in S} \alpha(\omega) \omega : \alpha \in \RR^S\right\}.
$$
the \emph{span} of $S$. Let $\omega^* \in \Span(S)$. Then,
\begin{align*}
\omega^* &= \sum_{\omega \in S} \alpha^*(\omega) \omega \\ 
&= \sum_{\omega \in S} |\alpha^*(\omega)| \sign(\alpha^*(\omega)) \omega \\ 
&= \underbrace{\left( \sum_{\omega \in S} |\alpha^*(\omega)| \right)}_{\text{total weight}} \underbrace{ \left(\sum_{\omega \in S} \frac{|\alpha^*(\omega)|}{\sum_{\omega \in S} |\alpha^*(\omega)|} \sign(\alpha^*(\omega)) \omega \right)}_{\pi^* \in \chull(\sign_{\alpha^*} (S))},
\end{align*}
where the first term can be understood as the total weight of $\omega^*$, and the second term, $\pi^*$, an element in the convex hull of the \emph{signed} sample,
$$
\sign_{\alpha^*} (S) := \{\sign(\alpha^*(\omega)) \omega : \omega \in S\}.
$$
To approximate $\omega^* \in \Span(S)$, we first write $\omega^* = \left(\sum_{\omega \in S} |\alpha^*(\omega)| \right) \pi^*$, we then approximate $\pi^*$ with $\tilde{\pi} \in \chull(\sign_{\alpha^*} (S))$ via clustered subsampling. Finally we take, 
$$
\tilde{\omega} = \left(\sum_{\omega \in S} |\alpha^*(\omega)| \right) \tilde{\pi}.
$$

\subsection{Parallel Extension}

In Theorem \ref{Binning and sub-sampling for fun and profit} we made use of a partition of $S$ to produce a sparse approximation of $\omega^* \in \chull(S)$. Partitions can also be used to parallelize any procedure for constructing sparse approximations. One has,
$$
\sum_{\omega \in S} \alpha(\omega) \omega = \sum_{i=1}^k \alpha(S_i) \left( \sum_{\omega \in S_i} \frac{\alpha(\omega)}{\alpha(S_i)}\omega \right),
$$
where we have split an average over $S$ into $k$ averages over the disjoint subsets $S_i$, $i\in [1;k]$. If we approximate each sub-average to tolerance $\epsilon$, combining the approximations yields an approximation to the total average with tolerance $\epsilon$.

\begin{lemma}[Parallel Means]\label{Parallel Means}
	Let $\omega = \sum \lambda_i \omega_i$ with $\lambda_i \geq 0$ and $\sum \lambda_i = 1$. Suppose that for each $i$ there is an approximation $\tilde{\omega}_i$ with $\norm{\omega_i - \tilde{\omega}_i} \leq \epsilon$. Then $\norm{\omega - \sum \lambda_i \tilde{\omega}_i} \leq \epsilon$.
\end{lemma}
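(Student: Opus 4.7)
The plan is to use the definition of $\omega$, pull the sum inside the norm, and apply the triangle inequality (combined with positive homogeneity of the norm) term by term. Since the bound $\norm{\omega_i - \tilde{\omega}_i} \leq \epsilon$ is uniform in $i$, the convex combination of these bounds collapses back to $\epsilon$ because $\sum \lambda_i = 1$.

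Concretely, the steps I would carry out are: first substitute $\omega = \sum_i \lambda_i \omega_i$ into $\norm{\omega - \sum_i \lambda_i \tilde{\omega}_i}$ and combine the two sums to obtain $\norm{\sum_i \lambda_i (\omega_i - \tilde{\omega}_i)}$. Second, apply the triangle inequality to pull the sum outside the norm, yielding an upper bound of $\sum_i \lambda_i \norm{\omega_i - \tilde{\omega}_i}$ (here I use that each $\lambda_i \geq 0$, so $\norm{\lambda_i v} = \lambda_i \norm{v}$). Third, use the per-component hypothesis $\norm{\omega_i - \tilde{\omega}_i} \leq \epsilon$ to bound the sum by $\epsilon \sum_i \lambda_i$. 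Finally, use $\sum_i \lambda_i = 1$ to conclude.

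There is no real obstacle here; the statement is essentially the convexity of the norm applied to the difference. The only point that requires a moment of care is that the coefficients $\lambda_i$ must be nonnegative so that the triangle inequality combines cleanly with the scalar factors; this is precisely what the hypothesis supplies. The argument works for any seminorm on $\Hilbert$ (not just the Hilbert norm), and it is exactly the reason the parallel extension of herding preserves the global approximation tolerance.
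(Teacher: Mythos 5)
Your argument is correct and is exactly the one the paper intends: the paper states that the proof is ``a simple application of the triangle inequality and the homogeneity of norms,'' which is precisely your decomposition $\norm{\sum_i \lambda_i(\omega_i - \tilde{\omega}_i)} \leq \sum_i \lambda_i \norm{\omega_i - \tilde{\omega}_i} \leq \epsilon \sum_i \lambda_i = \epsilon$. Nothing is missing.
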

The proof is a simple application of the triangle inequality and the homogeneity of norms. The lemma \ref{Parallel Means} allows one to use a map reduction algorithm to sparsely represent large data sets. The data is split into $K$ groups and then sparsely approximates the mean of each group.
\\
\\
The cost of parallelization is a possibly denser approximation, as the following example shows. Consider the following sample $S = \{1, 1, 0, 0\}$, that is, $S$ consists of two duplicates of $1$ and $0$. Using the standard linear kernel, $D^*(S, 2) = 0$, $S$ can be perfectly approximated by two elements. However, naively partitioning $S$ into two sets $S_i = \{0,1\}$ each with one copy of $0$ and $1$ also has $D^*(S_i, 2) = 0$. Combining the sparse approximations of $S_i$ yields the approximation to $S$ with four elements. 
\\
\\
This issue can be alleviated by a second round of sparse approximation. 

\subsection{Comparisons with Previous Work}

\subsubsection{Algorithmic Luckiness}

The subsampling scheme presented in Theorem \ref{Binning and sub-sampling for fun and profit} appeared previously in the appendix of \citep{Herbrich2003}. There it was used to establish the existence of a $k$-sparse approximation $\tilde{\omega}$ with,
$$
\norm{\omega^* - \tilde{\omega}} \leq \frac{\sqrt{2} D^*(S,\frac{k}{2})}{\sqrt{k}}.
$$ 
They did not provide a computationally feasible means of constructing a near-optimal partition nor provided a concentration result. Theorem \ref{Binning and sub-sampling for fun and profit} coupled with algorithm 1 provides a computationally feasible scheme for constructing a $k$-sparse $\tilde{\omega}$ with,
$$
\norm{\omega^* - \tilde{\omega}} \leq 2 \sqrt{2} D^*\left(S,\frac{k}{2}\right) \left(\frac{1}{\sqrt{k}} + \sqrt{ \frac{ \logdelta}{k}}\right),
$$
with probability at least $1 - \delta$.

\subsubsection{Kernel Herding}

An alternate approach to random sampling is to directly attack the following optimization problem,
$$
\min_{\tilde{\omega} \in \chull(S)} \norm{\omega^* - \tilde{\omega}}^2.
$$
By utilizing a greedy optimization algorithm, a sparse approximation can be obtained. Kernel herding \citep{Welling2009,Chen2010} is one such approach. In general herding gives the same approximation guarantees as random sampling. There has been much interest in when herding gives \emph{faster} rates of convergence. Proposition 1 of \citep{Chen2010} demonstrates how a simple greedy procedure yields,
$$
\norm{\omega^* - \tilde{\omega}} \leq O(\frac{1}{d k}),
$$ 
where $d$ is the distance of $\omega^*$ to the boundary of $\chull(S)$. This scheme has the same computational complexity as ours. \citep{Bach2012} showed an equivalence between herding procedures and the Frank-Wolfe method for solving convex problems \citep{wolfe1976finding}. Via this correspondence, they produced more complicated algorithms, with equal or greater computational complexity, than that of \citep{Chen2010} with the apparently better rate of convergence,
$$
\norm{\omega^* - \tilde{\omega}} \leq O(e^{-d k}).
$$
We remark here that while these methods \emph{appear} to give better rates of convergence than our simple sampling scheme, in reality the constant $d$ is so small that this is not the case, as theorem \ref{Sparse Approximation Lower Bound} confirms.
\\
\\
Although the empirical performance of herding algorithms is impressive, at present there is no proof that these methods adapt to the complexity of the sample.

\subsubsection{Sparse Approximation of a Kernel Mean}

\citep{cortes2015sparse} also consider the problem of sparsely approximating a kernel mean. They also utilize farthest first traversal to construct a set of representative points $\tilde{S} \subseteq S$, but rather than clustering and then sub-sampling, they project onto the span of $\tilde{S}$. Their method guarantees,
$$
\norm{\tilde{\omega} - \omega^*} \leq \left(1 - \frac{m}{n}\right) D^*(S, m),
$$ 
with $\tilde{\omega}$ $m$-sparse.

\subsubsection{Sparsity Inducing Objectives versus Sparsity Inducing Algorithms}

Much of practical machine learning can be understood as solving regularized sample risk problems, 
$$
\min_{\omega \in \Hilbert} \frac{1}{n}\sum\limits_{i = 1}^{n} \ell(y_i, \langle \omega , \phi(x_i)) + \Omega(\omega),
$$
with $\ell$ a loss and $\Omega$ a regularizer. It is desirable for the evaluation speed of the outputted classifier that $\omega$ be as sparse as possible. For example, the linear loss objective does not return a sparse solution.
\\
\\
One can understand objectives that promote sparsity, via sparsity inducing losses or sparsity inducing regularizers. For example, in the Lasso, the L1 regularizer $\Omega(\omega) = \lambda \sum_{i=1}^{n} \left| \omega_i \right|$ is used \citep{Tibshirani1996}. Alternately, \citep{Bartlett2007} use the standard square norm regularizer $\Omega(\omega) = \frac{\lambda}{2} \norm{\omega}^2$, and vary the loss. They show there is an inherit trade off between sparse solutions, and solutions that give calibrated probability estimates. Note that this is for a \emph{particular} choice of regularizer. In this approach, the properties of the \emph{actual} minimizer are deduced from the KKT conditions of the relevant optimization objective. 
\\
\\
In practice, one rarely returns the \emph{exact} minimizer. Therefore, the search for \emph{objectives} that have sparse minimizers does not tell the full story. The approach taken in Section \ref{Sparse Approximation} is to find a single method that can be used to sparsely approximate any $\omega \in \chull(S)$, be it the optimal $\omega$ for one of the objectives above, or be it a $\omega$ that is generated via some other scheme.

\section{Tying it All Together: The Robustness, Sparsity Trade-off}

Recall in Section \ref{sec:approximation-error-and-margins} that the margin of error of a classifier measures the degree to which it can be approximated without an increase in its misclassification risk. This ``budget" can be spent on a variety of different approximations, be it a finite sample, noise on the labels, or the sparsity of the final classifier. We can understand this trade-off through a combination of our previous results.
\begin{corollary}\label{stick it together}
For all distributions $P$, $\sigma \in [0, \frac{1}{2})$ and $m > 0$, 
$$
\hnorm{\Phi(P) - \tilde{\omega}} \leq \frac{1}{1 - 2 \sigma} \left(\frac{2}{\sqrt{n}} + \sqrt{\frac{2 \log(\frac{2}{\delta})}{n}} \right) + \frac{2 D^*(S, m)}{1 - 2 \sigma} \left(\frac{1}{\sqrt{m}} + \sqrt{\frac{\log(\frac{2}{\delta})}{m}}\right),
$$
with probability at least $1 - \delta$, where $\tilde{\omega}$ is the output of algorithm 1 on a sample $S$ comprising of $n$ independent draws from $P_\sigma$. Furthermore, $\tilde{\omega}$ is at most $2m$-sparse.

\end{corollary}
The proof proceeds via a combination of lemma \ref{noisy mean estimation}, the approximation guarantee of algorithm 1, the triangle inequality, and finally a union bound. The first term on the right-hand side of the bound can be interpreted as the purely statistical penalty of approximating $\Phi(P)$ via a finite sample, with possible noise on the labels. The second term shows an interesting interaction between sparse approximations and label noise.
\\
\\
First, label noise directly affects the quality of the sparse approximation by a factor of $\frac{1}{1 - 2 \sigma}$. Second, and perhaps more subtlety, for $\sigma > 0$ it can be the case that noisy samples have a larger diameter than clean examples.
\\
\\
Consider the example of figure \ref{fig:checkerboard} of section \ref{sec:sparse-approximation}. Although $D^*(S, 16)$ is small, injecting a small amount of noise into the labels increases the diameter. This is because while the clean sample $S$ has $16$ clusters, a noisy sample will potentially have $32$ clusters.  To get a ``noise-free" perspective of the problem, one can upper bound the diameter of $S$ with the diameter of
$$
\pm S : = {\pm \omega: \omega \in S}.
$$
As $S \subset \pm S$, $D^*(S,m) \leq D^*(\pm S, m)$. Furthermore, $D^*(\pm S, m)$ is not affected by potential noise on the labels.

\section{Experiments}\label{sec:experimental-validation}

Here we provide experimental corroboration of our results. We begin by illustrating the power of clustered subsampling as a means to sparsely approximate kernel expansions. We give an example showing when clustered sub-sampling out performs random sub-sampling. We then illustrate the robustness properties of the mean classifier in the example of \citep{Long2008} and on several UCI data sets. 

\subsection{Sparse Approximation}\label{sec:sparse-approximation}

Figure \ref{fig:checkerboard} illustrates a binary classification problem in which the instances of each class clearly form clusters. One can see that there are 16 clusters, half of which comprise of positively labeled instances, the other negatively labeled. We utilize a Gaussian kernel with kernel function and distance given by,
\begin{align*} 
K(x,x') = \exp\left(-\frac{\norm{x - x'}_2^2}{2 \kappa^2}\right) &\ \text{and}\ \norm{y \phi(x) - y' \phi(x')} = \sqrt{2 - 2 y y' \exp\left(-\frac{\norm{x - x'}_2^2}{2 \kappa^2}\right)},
\end{align*}
with the suitably chosen $\kappa$. Note that any two instances with \emph{different} labels are at least $\sqrt{2}$ apart. Figure \ref{fig:checkerboardClustered} was produced by the farthest first traversal of the sample from Figure \ref{fig:checkerboard} for $m=16$ iterations, before clustering and then sub-sampling, yielding an approximation to the mean of sparsity $k=32$. The sparse classifier obtained from Figure \ref{fig:checkerboardClustered} correctly classifies all instances in figure \ref{fig:checkerboard}. In contrast, randomly sampling $32$ elements from the data set of figure \ref{fig:checkerboard} will with high probability miss one of the $16$ clusters, producing an inferior classifier.

\begin{minipage}{\textwidth}
	\begin{minipage}[c]{0.45\textwidth}
		
		\includegraphics[width=\linewidth]{checkerboard}
		\captionof{figure}{Checkerboard data set, illustrating the utility of clustered sub-sampling. See text.}
		\label{fig:checkerboard}
		
	\end{minipage}	
	\quad
	\begin{minipage}[c]{0.45\textwidth}
		
		\includegraphics[width=\linewidth]{checkerboardSparse}
		\captionof{figure}{Sparse Approximation of the checkerboard data set. See text.}
		\label{fig:checkerboardClustered}
	\end{minipage}
\end{minipage}

\subsection{Robustness Guarantees}\label{sec:linear-feature-map-and-label-noise}

We first show that the linear risk minimizer performs well in the example of \citep{Long2008}. Figure \ref{fig:long2} shows the distribution $P$, where $X = \{ ( 1, -1 ), ( 1, 3 ), (30, 0 ) \} \subset \RR^{2}$, with instances chosen with probability $\frac{1}{2}, \frac{1}{4}$ and $\frac{1}{4}$, respectively. All three instances are labeled positive. We use the identity feature map, with the corresponding linear function class,
$$
\Fclass = \{f(x) = \omega_1 x_1 + \omega_2 x_2 : \omega_1, \omega_2 \in \RR \}.
$$
Solving for,
$$
\argmin_{f\in \Fclass} \riskL{\mathrm{hinge}}(P,f)  = \argmin_{\omega \in \RR^2}\EE_{(x,y) \dist P} \max(0,1-\ip{\omega}{x}),
$$
yields the solid black hyperplane, which correctly classifies all points. Solving for, 
$$
\argmin_{f\in \Fclass} \riskL{\mathrm{hinge}}(P_\sigma,f),
$$
for $\sigma = 0.15$, yields the dashed black hyperplane, which incorrectly classifies the southern most point. As this point is chosen with probability $\frac{1}{2}$, this classifier performs as well as random guessing. The scale of the data set can be chosen so that this occurs for $\sigma$ arbitrarily small. 
\\
\\
In figure \ref{fig:long2}, we show the performance of the mean classifier in the Long and Servedio data set. In contrast to the SVM, the mean classifier provides the red hyperplane, which correctly classifies all data points, for all $\sigma \in [0, \frac{1}{2})$.
\\
\\
We next consider empirical risk minimizers from a random training sample: we construct a training set of $800$ instances drawn from $P_\sigma$. We evaluated the classification performance on a test set of $1000$ instances drawn from $P$. We repeat the experiment for various noise rates. We compare the hinge, linear, and the $t$-logistic loss functions (for $t = 2$) \citep{Ding2010}. From Table \ref{tbl:long-matlab}, even when $\sigma = 0.4$, the unhinged classifier is able to find a perfect solution. In contrast, both other losses suffer at even moderate noise rates.
\\
\\

\begin{minipage}{\textwidth}
	\begin{minipage}[c]{0.45\textwidth}
		
		\includegraphics[width=0.9\linewidth]{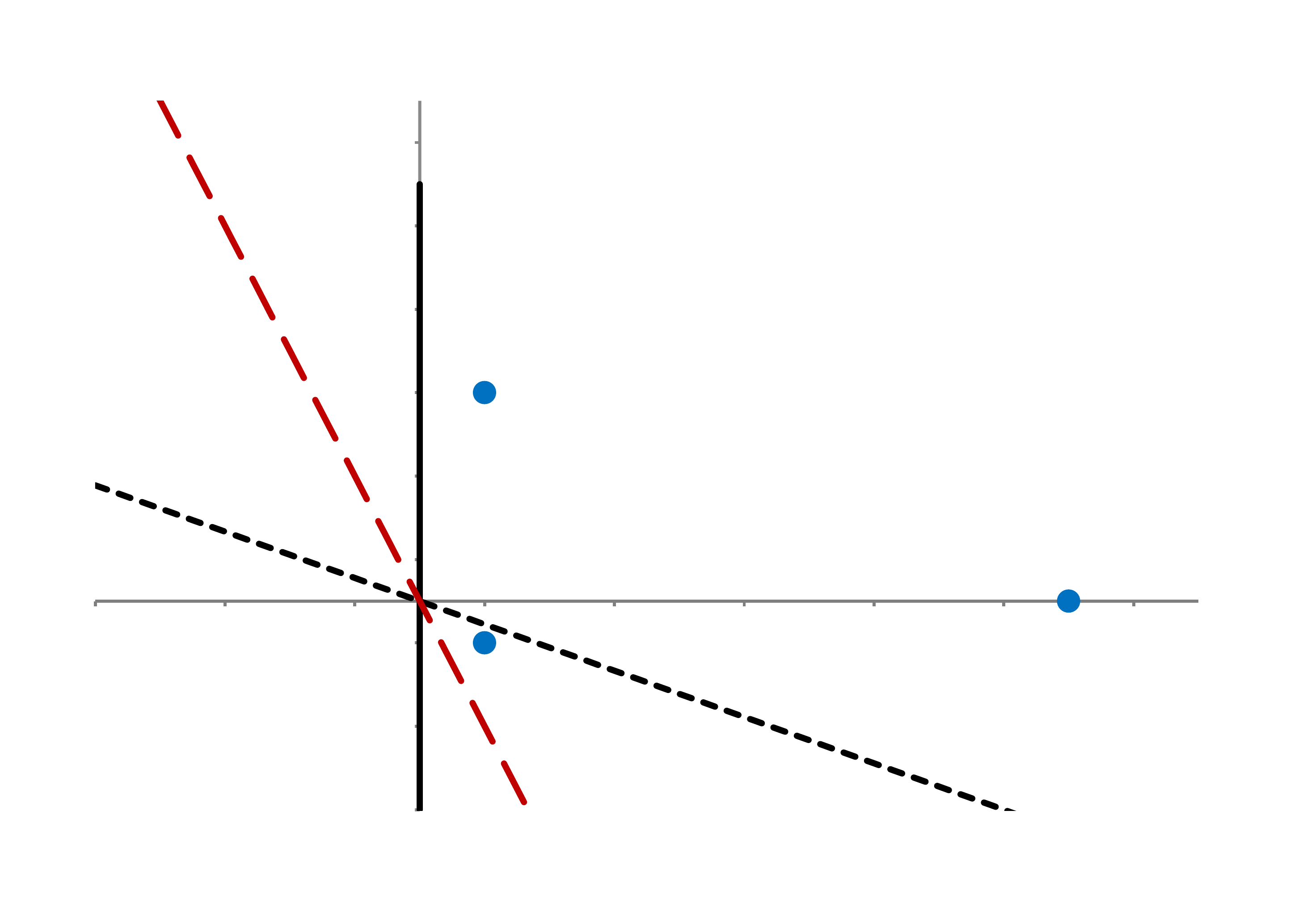}
		\captionof{figure}{Mean classifier performance on Long and Servedio data set.}
		\label{fig:long2}
		
	\end{minipage}	
	\quad
	\begin{minipage}[c]{0.45\textwidth}
		{\scriptsize
			\begin{tabular}{@{}llll@{}}
				\toprule
				\toprule
				& \textbf{Hinge} & \textbf{$t$-logistic} & \textbf{Linear} \\ 
				\midrule
				$\sigma = 0$ & \cellcolor{gray!25}{0.00 $\pm$ 0.00} & \cellcolor{gray!25}{0.00 $\pm$ 0.00} & \cellcolor{gray!25}{0.00 $\pm$ 0.00} \\ 
				$\sigma = 0.1$ & 0.15 $\pm$ 0.27 & \cellcolor{gray!25}{0.00 $\pm$ 0.00} & \cellcolor{gray!25}{0.00 $\pm$ 0.00} \\ 
				$\sigma = 0.2$ & 0.21 $\pm$ 0.30 & \cellcolor{gray!25}{0.00 $\pm$ 0.00} & \cellcolor{gray!25}{0.00 $\pm$ 0.00} \\ 
				$\sigma = 0.3$ & 0.38 $\pm$ 0.37 & 0.22 $\pm$ 0.08 & \cellcolor{gray!25}{0.00 $\pm$ 0.00} \\ 
				$\sigma = 0.4$ & 0.42 $\pm$ 0.36 & 0.22 $\pm$ 0.08 & \cellcolor{gray!25}{0.00 $\pm$ 0.00} \\ 
				$\sigma = 0.49$ & 0.47 $\pm$ 0.38 & 0.39 $\pm$ 0.23 & \cellcolor{gray!25}{0.34 $\pm$ 0.48} \\ 
				\midrule
			\end{tabular}
		}
		\captionof{table}{Mean and standard deviation of the 01 risk over 125 trials.}
		\label{tbl:long-matlab}
	\end{minipage}
\end{minipage}

\section{Conclusion}

It is well known that no single learning algorithm is best in all circumstances. We have studied the mean classifier and demonstrated its robustness to various types of noise and shown that its apparent deficiency (lack of sparseness of the solution) can be substantially alleviated with a tractable sparsification algorithm. The result is a conceptually clear and theoretically justified means of learning classifiers.

\newpage

\appendix

\section{Proofs of Theorems in the Main Text}\label{sec:proofs-of-theorems-in-the-main-text}

\subsection{Proof of Theorem \ref{Surrogate Regret Bound for Linear Loss}}

\begin{proof}
\\
	From $P$ define $P_X$ to be the marginal distribution over instances and $\eta(x) = P(Y=1|X = x)$. Then,
	\begin{align*}
	\risklin(P,f) &= \EE_{(x,y) \dist P} 1 - y f(x) \\
	&= \EE_{x \dist P_X} 1 + (1-2\eta(x)) f(x).
	\end{align*}
	Minimizing over $f \in \BoundedF{X}{1}$ gives $f_{\mathrm{linear}, P}(x) = - 1$ if $1 - 2 \eta(x) \geq 0$ i.e. when $\eta(x) < \frac{1}{2}$ and $f_{\mathrm{linear}, P}(x) = 1$ otherwise. We have, 
	$$
	\risklin(P,f_{\mathrm{linear}, P}) = \EE_{x \dist P_X} 1 - \left|(1-2\eta(x))\right|.
	$$
	Therefore,
	\begin{align*}
	\risklin(P,f) - \risklin(P,f_{\mathrm{linear}, P}) &= \EE_{x \dist P_X} (1- 2 \eta(x)) f(x) + |(1-2\eta(x))| \\ 
	&= \EE_{x \dist P_X} \left|(1-2\eta(x))\right| - \sign(2 \eta(x) - 1) \left|(1-2\eta(x))\right| f(x) \\
	&= \EE_{x \dist P_X} \left|(1-2\eta(x))\right|(1 - \sign(2 \eta(x) - 1)f(x) ).
	\end{align*}
	It is well known that, 
	$$
	\riskmis(P,f) - \riskmis(P,f_{\mathrm{01}, P}) = \EE_{x \dist P_X} \left|(1-2\eta(x))\right| [\![ \sign(2\eta(x) - 1)f(x) \leq 0 ]\!].
	$$
	We complete the proof by noting $[\![ v \leq 0 ]\!] \leq 1 - v$ for $v \in [-1,1]$.
	
\end{proof}

\subsection{Proof of Theorem \ref{Means and Rademacher}}

Before the proof, we state a general form of McDiarmid's inequality, a well-known concentration of measure result.

\begin{theorem}[McDiarmid's inequality]
Let $Z_i$, $i\in [i; n]$, be a collection of $n$ independent random quantities each taking a value in some set $\Omega_i$, with $Z = \left(Z_1, Z_2, \dots, Z_n\right)$. Furthermore let $f : \times_{i=1}^n \Omega_i \rightarrow \RR$ with,
$$
c_i = \sup_{z, z' : z_j = z'_j \forall j \neq i} \left|f(z) - f(z') \right|.
$$
Then with probability at least $1 - \delta$, 
$$
f(z) \leq \EE f(Z) + \sqrt{\frac{\logdelta \sum_{i=1}^{n} c_i^2}{2}}.
$$

\end{theorem}
Intuitively, if the function $f$ is insensitive to perturbations in a single argument, and the arguments of $f$ can't ``conspire", then $f$ is concentrated around its expectation. We now prove theorem \ref{Means and Rademacher}.

\begin{proof}
Let $Z = \left((Y_1, X_1), ... , (Y_n, X_n))\right)$ and,
$$
f(z) = \norm{\Phi(P) - \frac{1}{n} \sum_{i=1}^n y_i \phi(x_i)} = \norm{\Phi(P) - \Phi(S)}.
$$
It is easily verified that $c_i = \frac{2}{n}$ for all $i \in [1,n]$. An application of McDiarmid's inequality yields,
$$
f(z) \leq \EE f(Z) + \sqrt{\frac{2 \logdelta}{n}}.
$$
with probability at least $1 - \delta$. All that remains is to bound $\EE f(z)$. We have,
\begin{align*}
	\EE f(Z) &= \EE \norm{\Phi(P) - \frac{1}{n} \sum_{i=1}^{n}Y_i \phi(X_i)} \\
	&\leq \sqrt{\EE \norm{\Phi(P) - \frac{1}{n} \sum_{i=1}^{n}Y_i \phi(X_i)}^2} \\
	&= \sqrt{\frac{1}{n^2} \sum_{i=1}^{n} \sum_{j=1}^{n}\EE \ip{\Phi(P) - Y_i \phi(X_i)}{\Phi(P) - Y_j \phi(X_j)}}\\
	&= \sqrt{\frac{1}{n^2} \sum_{i=1}^{n} \EE \norm{\Phi(P) - Y_i \phi(X_i)}^2} \\
	&\leq \frac{2}{\sqrt{n}},
\end{align*}
Where we have used the concavity of $\sqrt{\ }$, independence of the $\left(x_i, y_i\right)$ pairs and finally the boundedness of the feature map.
	
\end{proof}

\subsection{Proof of Theorem \ref{Margins and Approximation}}

Before the proof we prove the following simple lemma.

\begin{lemma}
	Let $v, \tilde{v} \in \RR$ with $|v - \tilde{v}| \leq \epsilon$. Then $\tilde{v} < 0$ implies $v < \epsilon$.
	
\end{lemma}

\begin{proof}	
	We have $v - \epsilon \leq \tilde{v} \leq v + \epsilon $. If $\tilde{v} < 0$, then $v - \epsilon < 0$.
	
\end{proof}
We now prove the theorem.

\begin{proof}
	By the conditions of the theorem, $| f(x) - \tilde{f}(x) | \leq \epsilon$ for all $x \in X$, meaning $| y f(x) - y \tilde{f}(x) | \leq \epsilon$ for all pairs $(x,y)$. By the previous lemma, $y \tilde{f}(x) < 0$ implies $y f(x) < \epsilon$. This means, 
	$$
	\pred{y \tilde{f}(x) < 0} \leq \pred{y f(x) < \epsilon}.
	$$
	Averaging over $P$ yields the desired result.
\end{proof}

\subsection{Proof of Proposition \ref{universal consistency}}

\begin{proof}
	Let $P(Y=1|X=x)$ be the conditional probability of observing the positive label. It is well known that the Bayes optimal classifier for misclassification loss is given by, $f_{01,P}(x) = 1$ if $P(Y=1|X=x) > \frac{1}{2}$ and $0$ otherwise.
	\\
	\\
	Let $P(\tilde{Y} = 1 | X = x)$ be the conditional probability of observing a positive label drawn from $P_\sigma$. By a simply calculation,
	\begin{align*}
		P(\tilde{Y} = 1 | X = x) &= (1 - \sigma)P(Y = 1 | X = x) + \sigma P(Y = -1 | X = x) \\
		&= (1 - 2 \sigma)P(Y = 1 | X = x) + \sigma,
	\end{align*}
	if $P(Y=1 | X = x) > \frac{1}{2}$ then,
	$$
	P(\tilde{Y} = 1 | X = x) > (1 - 2 \sigma)\frac{1}{2} + \sigma = \frac{1}{2}.
	$$
	Secondly, if $P(\tilde{Y} = 1 | X = x) > \frac{1}{2}$ then,
	$$
	(1 - 2 \sigma)P(Y = 1 | X = x) + \sigma > \frac{1}{2},
	$$
	which implies $P(Y = 1 | X = x) > \frac{1}{2}$. Therefore, $f_P^{01} = f_{P_\sigma}^{01}$.	
	
\end{proof}

\subsection{Proof of Theorem \ref{Strong Robustness to Label Noise Lemma}}

The proof requires the following result, which characterizes when two losses are order equivalent. 

\begin{proposition}[Theorem 2, section 7.9 \citep{DeGroot1962}]\label{De Groot}
	Let $\ell_1$ and $\ell_2$ be loss functions. $\ell_1$ and $\ell_2$ are equivalent in order if and only if there exist constants $\alpha > 0$ and $\beta$ such that, 
	$$
	\ell_2(y,v) = \alpha \ell_1(y,v) + \beta.
	$$
\end{proposition}
We now prove the theorem.

\begin{proof}	
	We begin with the reverse implication. Since, 
	$$
	\EE_{(v,y) \dist Q} \ell(y,v) = \EE_{(v,y) \dist Q_\sigma} \ell_{\sigma}(y,v),\ \forall Q, Q',
	$$
	we have $Q \leq_{\ell} Q' \Leftrightarrow Q_{\sigma} \leq_{\ell_\sigma} Q'_{\sigma}$. As we assume, $\ell$ and $\ell_\sigma$ are order equivalent, $Q_{\sigma} \leq_{\ell_\sigma} Q'_{\sigma} \Leftrightarrow Q_{\sigma} \leq_{\ell} Q'_{\sigma}$. Therefore,
	$$
	Q \leq_{\ell} Q' \Leftrightarrow Q_{\sigma} \leq_{\ell} Q'_{\sigma}.
	$$
	For the forward implication, define the loss $\ell'$ with,
	$$
	\TwoVector{\ell'(-1,v)}{\ell'(1,v)} = \Tsymmetrc \TwoVector{\ell(-1,v)}{\ell(1,v)},\ \forall v \in \RR.
	$$
	It is easily verified that $\ell'_\sigma = \ell$. This means,
	$$
	\EE_{(v,y) \dist Q} \ell'(y,v) = \EE_{(v,y) \dist Q_\sigma} \ell(y,v),\ \forall Q, Q',
	$$
	but as $Q \leq_{\ell} Q' \Leftrightarrow Q_{\sigma} \leq_{\ell} Q'_{\sigma}$, we have, 
	$$
	Q \leq_{\ell} Q' \Leftrightarrow Q \leq_{\ell'} Q'.
	$$
	Therefore $\ell$ and $\ell'$ are order equivalent. Invoking lemma \ref{De Groot} and the definition of $\ell'$ yields,
	$$
	\Tsymmetrc \TwoVector{\ell(-1,v)}{\ell(1,v)} =\alpha \TwoVector{\ell(-1,v)}{\ell(1,v)} + \beta \TwoVector{1}{1} ,\ \forall v \in \RR,
	$$
	for $\alpha > 0$. This yields,
	$$
	\TwoVector{\ell(-1,v)}{\ell(1,v)} =  \alpha \underbrace{\left( \Msymmetric \TwoVector{ \ell(-1,v)}{\ell(1,v)} \right)}_{\ell_\sigma} + \beta \TwoVector{1}{1} ,\ \forall v \in \RR.
	$$
	Therefore $\ell$ is order equivalent to $\ell_\sigma$.
	
\end{proof}

\subsection{Proof of Theorem \ref{Strong robustness to label noise characterization}}

\begin{proof}	
	As $\ell$ and $\ell_\sigma$ are equivalent in order, by the lemma \ref{De Groot}, $\ell_{\sigma}(y,v) = \alpha \ell(y,v) + \beta$. Combined with the definition of $\ell_\sigma$ yields,
	$$
	\frac{(1-\sigma) \ell(y,v) - \sigma \ell(-y,v)}{1- 2 \sigma} = \alpha \ell(y,v) + \beta.
	$$
	Setting $y = \pm 1$ yields the following two equations,
	\begin{align}
	(1-\sigma) \ell(1,v) - \sigma \ell(-1,v) &= (1- 2\sigma)(\alpha \ell(1,v) + \beta) \\
	(1-\sigma) \ell(-1,v) - \sigma \ell(1,v) &= (1- 2\sigma)(\alpha \ell(-1,v) + \beta).
	\end{align}
	Adding these two equations together and dividing through by $1-2 \sigma$ yields,
	\begin{equation}
	\ell(1,v) + \ell(-1,v) = \alpha (\ell(1,v) + \ell(-1,v)) + 2 \beta.
	\end{equation}
	If $\alpha \neq 1$, $\ell(1,v) + \ell(-1,v) = \frac{2 \beta}{1 - \alpha} = C$ and the proof is complete. If $\alpha = 1$, $\beta = 0$ by $(3)$. Inserting these values into $(2)$ yields,
	$$
	(1-\sigma) \ell(1,v) - \sigma \ell(-1,v) = (1- 2\sigma)\ell(1,v).
	$$
	Thus $\ell(1,v) = \ell(-1,v)$, an excluded pathological case. For the converse, if $\ell(y,v) + \ell(-y,v) = C$ then $\ell(-y,v) = C - \ell(y,v)$. This means,
	\begin{align*}
	\ell_{\sigma}(y,v) &= \frac{(1-\sigma) \ell(y,v) - \sigma \ell(-y,v)}{1- 2 \sigma} \\
	&= \frac{(1-\sigma) \ell(y,v) - \sigma (C - \ell(y,v))}{1- 2 \sigma} \\
	&= \frac{1}{1 - 2 \sigma} \ell(y,v) - \frac{\sigma C}{1 - 2 \sigma},
	\end{align*}
	and thus by the above lemma, $\ell$ and $\ell_\sigma$ are equivalent in order.
	
\end{proof}

\subsection{Proof of Theorem \ref{Uniqueness of Linear Loss}}

\begin{proof}	
	We begin with the forward implication. We have $\ell(y,v)$ is convex in $v$, furthermore $\ell(y,v) + \ell(-y,v) = C$. This means $\ell(y,v) = C - \ell(-y,v)$, hence $-\ell(-y,v)$ is convex. Thus as $\ell(y,v)$ and $-\ell(y,v)$ are convex, $\ell(y,v) = \alpha_y v + g(y)$. But, 
	\begin{align*}
	\ell(y,v) + \ell(-y,v) &= \alpha_y v + g(y) + \alpha_{-y} v + g(-y) \\
	&= (\alpha_y + \alpha_{-y}) v + g(y) + g(-y) \\
	&= C.
	\end{align*}
	Therefore $\alpha_{-y} = - \alpha_{y} = \lambda$ and $\ell(y,v) = \lambda y v + g(y)$. For the converse, if $\ell(y,v) = \lambda y v + g(y)$, then, 
	$$
	\ell(y,v) + \ell(-y,v) = g(y) + g(-y) = C.
	$$
	Therefore any loss that is convex in its second argument and robust to label noise is order equivalent to,
	$$
	\ell(y,v) = \lambda y v.
	$$
	By the characterization of classification calibration \citep{Bartlett2006}, we must have $\lambda < 0$ for $\ell$ to be classification calibrated.

\end{proof}

\subsection{Proof of Theorem \ref{No loss is immune to class conditional label noise}}

\begin{proof}	
	If $\sigma_{1} \ell(-1,v) + \sigma_{-1} \ell(1,v) = C $, this means $\sigma_{-y} \ell(y,v) + \sigma_{y} \ell(-y,v) = C $ for all $y$. This yields,
	\begin{align*}
	\ell_{\sigma_{-1}, \sigma_1}(y,v) &= \frac{(1-\sigma_{-y}) \ell(y,v) - \sigma_y \ell(-y,v)}{1- \sigma_{-1} - \sigma_1} \\
	&= \frac{(1-\sigma_{-y}) \ell(y,v) - (C - \sigma_{-y} \ell(y,v))}{1- \sigma_{-1} - \sigma_1} \\
	&= \frac{1}{1- \sigma_{-1} - \sigma_1} \ell(y,v) - \frac{C}{1- \sigma_{-1} - \sigma_1}, 
	\end{align*}
	where the first line is the definition of $\ell_{\sigma_{-1}, \sigma_1}(y,v)$ and the second is by assumption. By lemma \ref{De Groot}, $\ell_{\sigma_{-1}, \sigma_1}$ and $\ell$ are order equivalent.
	
\end{proof}

\subsection{Proof of Theorem \ref{Balanced Error Noise Immunity}}

\begin{proof}
	Recall the balanced error,
	$$
	\mathrm{BER}_{\ell}(P_+,P_-,f) = \frac{1}{2} \EE_{x\dist P_+} \ell(1,f(x)) + \frac{1}{2} \EE_{x \dist P_-} \ell(-1,f(x)).
	$$
	Remember that,
	\begin{align*}
	\tilde{P}_+ = (1 - \alpha) P_+ + \alpha P_-  \ \text{and}\ & \tilde{P}_- = \beta P_+ + (1 - \beta) P_-.
	\end{align*}
	This means for all classifiers $f$,
	\begin{align*}
	\EE_{x \dist \tilde{P}_+} \ell(1,f(x)) &= (1 - \alpha) \EE_{x \dist P_+} \ell(1,f(x)) + \alpha \EE_{x \dist P_-} \ell(1,f(x)) \\
	&= (1 - \alpha) \EE_{x \dist P_+} \ell(1,f(x)) - \alpha \EE_{x \dist P_-} \ell(-1,f(x)) + C \alpha,
	\end{align*}
	where in the second line we have used the fact that $\ell(1,v) = C - \ell(-1,v)$. Similarly,
	$$
	\EE_{x \dist \tilde{P}_-} \ell(-1,f(x)) = -\beta \EE_{x \dist P_+} \ell(1,f(x)) + (1-\beta) \EE_{x \dist P_-} \ell(-1,f(x)) + C \beta.
	$$
	Taking the average of these two equations yields,
	$$
	\mathrm{BER}_{\ell}(\tilde{P}_+,\tilde{P}_-,f) = (1 - \alpha - \beta) \mathrm{BER}_{\ell}(P_+,P_-,f) + \frac{\left(\alpha + \beta \right)}{2} C.
	$$
	
\end{proof}

\subsection{Proof of Theorem \ref{robustness under all noise: theorem}}

\begin{proof}	
	Firstly, for all classifiers $f$,
	\begin{align*}
	\riskL{\ell}(P_\sigma,f) &= \EE_{(x,y)\dist P} (1-\sigma(x,y)) \ell(y,f(x)) + \sigma(x,y) \ell(-y,f(x)) \\
	&= \EE_{(x,y)\dist P} (1-\sigma(x,y)) \ell(y,f(x)) + \sigma(x,y) (C - \ell(y,f(x))) \\
	&= \EE_{(x,y)\dist P} (1-2 \sigma(x,y)) \ell(y,f(x)) + C \EE_{(x,y)\dist P} \sigma(x,y), 
	\end{align*}
	where in the second line we have used the fact that $\ell(1,v) + \ell(-1,v) = C$. Now let,
	\begin{align*} 
	f^*_\sigma = \argmin_{f \in \Fclass} \ell (P_\sigma, f) \ \text{and}\ & f^* = \argmin_{f \in \Fclass} \ell (P, f), 
	\end{align*}
	respectively. By definition, $\ell(P_\sigma, f^*_\sigma) \leq \ell(P_\sigma, f^*)$. Combined with the above this yields,
	$$
	\EE_{(x,y)\dist P} (1-2 \sigma(x,y)) \ell(y,f^*_\sigma(x)) \leq \EE_{(x,y)\dist P} (1-2 \sigma(x,y)) \ell(y,f^*(x)).
	$$
	From the assumption that $\sigma(x,y) < \frac{1}{2}$ for all $(x,y) \in X \times Y$,
	$$
	\min_{(x,y)}1-2 \sigma(x,y) \leq  1-2 \sigma(x,y) \leq 1 ,\ \forall (x,y) \in X \times Y.
	$$
	This yields,
	$$
	\left(\min_{(x,y)}1-2 \sigma(x,y)\right) \EE_{(x,y)\dist P} \ell(y,f^*_\sigma(x)) \leq \EE_{(x,y)\dist P}  \ell(y,f^*(x)),
	$$
	and the proof is complete.
	
\end{proof}

\subsection{Proof of Theorem \ref{Sparse Approximation Lower Bound}}

\begin{proof}
	Let $\left\{e_i\right\}_{i=1}^\infty$ be an orthonormal basis for $\Hilbert$, $\ip{e_i}{e_j} = 1$ if $i=j$ and $0$ otherwise. Fix $n > 0$ and let $S = \left\{e_i\right\}_{i=1}^n$ with $\omega^* = \frac{1}{n} \sum_{i=1}^{n}e_i$. It is easily verified that,
	$$
	\omega^* = \argmin_{\omega \in \chull(S)} \norm{\omega}^2,
	$$
	furthermore $\norm{\omega^*}^2 = \frac{1}{n}$. Lemma 3 of \citep{Jaggi2013} states for all $k$-sparse $\tilde{\omega}\in \chull(S)$, $\norm{\tilde{\omega}}^2 \geq \frac{1}{k}$. Therefore,
	$$
	\norm{\tilde{\omega}}^2 - \norm{\omega^*}^2 \geq \frac{1}{k} - \frac{1}{n},
	$$ 
	for all $k$-sparse $\tilde{\omega}$. Note that $\omega^*$ is the orthogonal projection of $0$ onto $\chull(S)$. Therefore by the Pythagorean theorem, $\norm{\tilde{\omega}}^2 - \norm{\omega^*}^2 = \norm{\omega^* - \tilde{\omega}}^2$, yielding,
	$$
	\norm{\omega^* - \tilde{\omega}} \geq \sqrt{\frac{1}{k} - \frac{1}{n}},
	$$
	and the claim is proved.
	
\end{proof}

\subsection{Proof of Theorem \ref{Binning and sub-sampling for fun and profit}}

\begin{proof}
	For the first claim, denote by $l_i$ the sparsity of $\omega_i$ and by $l$ the sparsity of $\omega$. We have,
	$$
	l = \sum_{i = 1}^{m} l_i \leq \sum_{i = 1}^{m} \lceil \alpha(S_i) m \rceil \leq \sum_{i = 1}^{m} \alpha(S_i) m + 1 = 2 m.
	$$
	where the first inequality holds as their may be repeated elements in the sub-sample, and the second follows from the definition of ceiling. For the second claim, considering the collection of independent random quantities $Z_{ij} \dist P_i$, $P_i$ is the distribution with support $S_i$ and $\omega \in S_i$ is chosen with probability $\frac{\alpha(\omega)}{\alpha(S_i)}$. Define,
	\begin{align*}
	Z_i  &= \frac{1}{\lceil \alpha(S_i) m \rceil} \sum_{j=1}^{\lceil \alpha(S_i) m \rceil} Z_{ij}\\
	Z &= \sum_{i=1}^{m} \alpha(S_i) Z_i.
	\end{align*}
	It is easily verified that,
	\begin{align*}
	\EE Z_i  &= \EE Z_{ij} = \sum_{\omega \in S_i} \frac{\alpha(\omega)}{\alpha(S_i)} \omega\\
	\EE Z &= \sum_{i=1}^{m} \alpha(S_i)\EE Z_i = \sum_{\omega \in S} \alpha(\omega) \omega.
	\end{align*}
	Here we use McDiarmid's Inequality to control variations of $\norm{\EE Z - Z}$. Firstly, by construction of the partition,
	$$
	c_{ij} \leq \frac{D}{m}.
	$$
	An application of McDiarmid's inequality yields
	\begin{align*}
	\norm{\EE Z - Z} &\leq \EE \norm{\EE Z - Z} + \sqrt{\frac{\logdelta \sum_{i=1}^{m} \sum_{j=1}^{\lceil \alpha(S_i) m\rceil } c_{ij}^2}{2}} \\
	& \leq \EE \norm{\EE Z - Z} + \sqrt{\frac{\logdelta \sum_{i=1}^{m} \sum_{j=1}^{\lceil \alpha(S_i) \rceil m} \frac{D^2}{m^2}}{2}} \\
	&\leq \EE \norm{\EE Z - Z} + \sqrt{\frac{\logdelta D^2}{m}},
	\end{align*}
	where the second line follows from the bound on $c_{ij}$ and the third follows as there are at most $2m$ terms in the summation. All that remains is to bound $\EE \norm{\EE Z - Z}$. As in the proof of theorem \ref{Means and Rademacher},
	\begin{align*}
	\EE \norm{\EE Z - Z} & \leq \sqrt{\EE \norm{\EE Z - Z}^2} \\
	&= \sqrt{\sum_{i = 1}^{m} \sum_{i' = 1}^{m} \EE \ip{\alpha(S_i)\left(\EE Z_i - Z_i\right)}{\alpha(S_{i'})\left(\EE Z_{i'} - Z_{i'}\right)}} \\
	&= \sqrt{\sum_{i = 1}^{m} \alpha(S_i)^2 \EE \norm{\EE Z_i - Z_i}^2} \\
	&= \sqrt{\sum_{i = 1}^{m} \frac{\alpha(S_i)^2 }{\lceil m \alpha(S_i) \rceil}\EE \norm{\EE Z_{ij}  - Z_{ij}}^2},\ \forall j \\
	&\leq \sqrt{\sum_{i = 1}^{m} \frac{\alpha(S_i) }{m} D^2} \\
	&\leq \frac{D}{\sqrt{m}},
	\end{align*}
	Where we have used the concavity of $\sqrt{\ }$, the independence of the $Z_i$, that fact $Z_i$ is the sum of $\lceil m \alpha(S_i) \rceil$ iid random quantities and then finally a bound on the variance of $Z_{ij}$ in terms of the diameter of the partition coupled with the fact $ m \alpha(S_i) \leq \lceil m \alpha(S_i) \rceil$.
	
\end{proof}

\bibliographystyle{plainnat}
\bibliography{./References}

\begin{thebibliography}{57}
\providecommand{\natexlab}[1]{#1}
\providecommand{\url}[1]{\texttt{#1}}
\expandafter\ifx\csname urlstyle\endcsname\relax
  \providecommand{\doi}[1]{doi: #1}\else
  \providecommand{\doi}{doi: \begingroup \urlstyle{rm}\Url}\fi

\bibitem[Altun and Smola(2006)]{Altun2006}
Yasemin Altun and Alex Smola.
\newblock {Unifying divergence minimization and statistical inference via convex duality}.
\newblock In \emph{The Proceedings of the 19th Annual Conference on Learning Theory (COLT06)}, pages 139--153. Springer, 2006.

\bibitem[Angluin and Laird(1988)]{Angluin1988}
Dana Angluin and Philip Laird.
\newblock {Learning from noisy examples}.
\newblock \emph{Machine Learning}, 2\penalty0 (4):\penalty0 343--370, 1988.

\bibitem[Bach et~al.(2012)Bach, Lacoste-Julien, and Obozinski]{Bach2012}
Francis Bach, Simon Lacoste-Julien, and Guillaume Obozinski.
\newblock {On the Equivalence between Herding and Conditional Gradient Algorithms}.
\newblock In \emph{Proceedings of the International Conference on Machine Learning (ICML)}, pages 1359--1366, 2012.

\bibitem[Bach(2008)]{Bach2008}
Francis~R. Bach.
\newblock {Consistency of the group lasso and multiple kernel learning}.
\newblock \emph{The Journal of Machine Learning Research}, 9:\penalty0 1179--1225, 2008.
\newblock ISSN 1532-4435.

\bibitem[Balcan et~al.(2008)Balcan, Blum, and Srebro]{Balcan2008}
Maria-Florina Balcan, Avrim Blum, and Nathan Srebro.
\newblock {A theory of learning with similarity functions}.
\newblock \emph{Machine Learning}, 72\penalty0 (1-2):\penalty0 89--112, 2008.
\newblock ISSN -6125.

\bibitem[Bartlett(1998)]{bartlett1998sample}
Peter~L. Bartlett.
\newblock The sample complexity of pattern classification with neural networks: the size of the weights is more important than the size of the network.
\newblock \emph{Information Theory, IEEE Transactions on}, 44\penalty0 (2):\penalty0 525--536, 1998.

\bibitem[Bartlett and Tewari(2007)]{Bartlett2007}
Peter~L. Bartlett and Ambuj Tewari.
\newblock {Sparseness vs estimating conditional probabilities: Some asymptotic results}.
\newblock \emph{The Journal of Machine Learning Research}, 8:\penalty0 775--790, 2007.

\bibitem[Bartlett et~al.(2006)Bartlett, Jordan, and McAuliffe]{Bartlett2006}
Peter~L. Bartlett, Michael~I. Jordan, and Jon~D. McAuliffe.
\newblock {Convexity, classification, and risk bounds}.
\newblock \emph{Journal of the American Statistical Association}, 101\penalty0 (473):\penalty0 138--156, 2006.

\bibitem[Bedo et~al.(2006)Bedo, Sanderson, and Kowalczyk]{bedo2006efficient}
Justin Bedo, Conrad Sanderson, and Adam Kowalczyk.
\newblock An efficient alternative to svm based recursive feature elimination with applications in natural language processing and bioinformatics.
\newblock In \emph{Australasian Joint Conference on Artificial Intelligence}, pages 170--180. Springer, 2006.

\bibitem[Chen et~al.(2010)Chen, Welling, and Smola]{Chen2010}
Yutian Chen, Max Welling, and Alexander~J. Smola.
\newblock {Super Samples from Kernel Herding}.
\newblock In \emph{Uncertainty in Artificial Inteligence (UAI)}, 2010.

\bibitem[Cortes and Vapnik(1995)]{Cortes1995}
Corinna Cortes and Vladimir Vapnik.
\newblock {Support-vector networks}.
\newblock \emph{Machine learning}, 20\penalty0 (3):\penalty0 273--297, 1995.

\bibitem[Cortes et~al.(2013)Cortes, Kloft, and Mohri]{Cortes2013}
Corinna Cortes, Marius Kloft, and Mehryar Mohri.
\newblock {Learning kernels using local Rademacher complexity}.
\newblock In \emph{Advances in Neural Information Processing Systems}, pages 2760--2768, 2013.

\bibitem[Cort{\'e}s and Scott(2016)]{cortes2015sparse}
Efr{\'e}n~Cruz Cort{\'e}s and Clayton Scott.
\newblock Sparse approximation of a kernel mean.
\newblock \emph{IEEE Transactions on Signal Processing}, 2016.

\bibitem[DeGroot(1962)]{DeGroot1962}
Morris~H. DeGroot.
\newblock {Uncertainty, information, and sequential experiments}.
\newblock \emph{The Annals of Mathematical Statistics}, 33\penalty0 (2):\penalty0 404--419, 1962.

\bibitem[Denchev et~al.(2012)Denchev, Ding, Neven, and Vishwanathan]{Denchev:2012}
Vasil Denchev, Nan Ding, Hartmut Neven, and S.~V.~N. Vishwanathan.
\newblock {Robust Classification with Adiabatic Quantum Optimization}.
\newblock In \emph{International Conference on Machine Learning (ICML)}, pages 863--870, 2012.

\bibitem[Devroye et~al.(1996)Devroye, Gy\"{o}rfi, and Lugosi]{Devroye1996}
Luc Devroye, L\'{a}szl\'{o} Gy\"{o}rfi, and G\'{a}bor Lugosi.
\newblock \emph{{A probabilistic theory of pattern recognition}}.
\newblock Springer, 1996.

\bibitem[Ding and Vishwanathan(2010)]{Ding2010}
Nan Ding and S.~V.~N. Vishwanathan.
\newblock {t-Logistic regression}.
\newblock In \emph{Advances in Neural Information Processing Systems}, pages 514--522, 2010.

\bibitem[G{\"a}rtner et~al.(2002)G{\"a}rtner, Flach, Kowalczyk, and Smola]{gartner2002multi}
Thomas G{\"a}rtner, Peter~A Flach, Adam Kowalczyk, and Alexander~J Smola.
\newblock Multi-instance kernels.
\newblock In \emph{ICML}, volume~2, pages 179--186, 2002.

\bibitem[Ghosh et~al.(2015)Ghosh, Manwani, and Sastry]{Ghosh2015}
Aritra Ghosh, Naresh Manwani, and P.~S. Sastry.
\newblock {Making risk minimization tolerant to label noise}.
\newblock \emph{Neurocomputing}, 160:\penalty0 93--107, 2015.

\bibitem[Gonzalez(1985)]{gonzalez1985clustering}
Teofilo~F. Gonzalez.
\newblock Clustering to minimize the maximum intercluster distance.
\newblock \emph{Theoretical Computer Science}, 38:\penalty0 293--306, 1985.

\bibitem[Gretton et~al.(2006)Gretton, Borgwardt, Rasch, Sch\"{o}lkopf, and Smola]{Gretton2006}
Arthur Gretton, Karsten~M. Borgwardt, Malte Rasch, Bernhard Sch\"{o}lkopf, and Alex~J. Smola.
\newblock {A kernel method for the two-sample-problem}.
\newblock In \emph{Advances in neural information processing systems}, pages 513--520, 2006.

\bibitem[Gretton et~al.(2012)Gretton, Borgwardt, Rasch, Sch\"{o}lkopf, and Smola]{Gretton:2012}
Arthur Gretton, Karsten~M Borgwardt, Malte~J Rasch, Bernhard Sch\"{o}lkopf, and Alexander Smola.
\newblock {A Kernel Two-sample Test}.
\newblock \emph{Journal of Machine Learning Research}, 13:\penalty0 723--773, March 2012.
\newblock ISSN 1532-4435.

\bibitem[Herbrich and Williamson(2003)]{Herbrich2003}
Ralf Herbrich and Robert~C. Williamson.
\newblock {Algorithmic luckiness}.
\newblock \emph{The Journal of Machine Learning Research}, 3:\penalty0 175--212, 2003.

\bibitem[Huber(1981)]{Huber:1981}
Peter~J Huber.
\newblock \emph{{Robust Statistics}}.
\newblock John Wiley \& Sons, 1981.

\bibitem[Hussain and Shawe-Taylor(2011)]{Hussain2011}
Zakria Hussain and John Shawe-Taylor.
\newblock {Improved loss bounds for multiple kernel learning}.
\newblock In \emph{International Conference on Artificial Intelligence and Statistics}, pages 370--377, 2011.

\bibitem[Jaggi(2013)]{Jaggi2013}
Martin Jaggi.
\newblock {Revisiting Frank-Wolfe: Projection-free sparse convex optimization}.
\newblock In \emph{Proceedings of the 30th International Conference on Machine Learning}, pages 427--435, 2013.

\bibitem[Kalai et~al.(2008)Kalai, Klivans, Mansour, and Servedio]{Kalai2008}
Adam~Tauman Kalai, Adam~R. Klivans, Yishay Mansour, and Rocco~A. Servedio.
\newblock {Agnostically learning halfspaces}.
\newblock \emph{SIAM Journal on Computing}, 37\penalty0 (6):\penalty0 1777--1805, 2008.

\bibitem[Kimeldorf and Wahba(1970)]{kimeldorf1970correspondence}
George~S Kimeldorf and Grace Wahba.
\newblock A correspondence between bayesian estimation on stochastic processes and smoothing by splines.
\newblock \emph{The Annals of Mathematical Statistics}, 41\penalty0 (2):\penalty0 495--502, 1970.

\bibitem[Lanckriet et~al.(2004)Lanckriet, Cristianini, Bartlett, Ghaoui, and Jordan]{lanckriet2004learning}
Gert~RG Lanckriet, Nello Cristianini, Peter Bartlett, Laurent~El Ghaoui, and Michael~I Jordan.
\newblock Learning the kernel matrix with semidefinite programming.
\newblock \emph{The Journal of Machine Learning Research}, 5:\penalty0 27--72, 2004.

\bibitem[Long and Servedio(2008)]{Long2008}
Philip~M. Long and Rocco~A. Servedio.
\newblock {Random classification noise defeats all convex potential boosters}.
\newblock In \emph{Proceedings of the 25th International Conference on Machine Learning}, pages 608--615, 2008.

\bibitem[Lugosi and Vayatis(2004)]{lugosi2004bayes}
G{\'a}bor Lugosi and Nicolas Vayatis.
\newblock On the bayes-risk consistency of regularized boosting methods.
\newblock \emph{Annals of Statistics}, pages 30--55, 2004.

\bibitem[Manwani and Sastry(2013)]{Manwani:2013}
Naresh Manwani and P.~S. Sastry.
\newblock {Noise Tolerance Under Risk Minimization}.
\newblock \emph{IEEE Transactions on Cybernetics}, 43\penalty0 (3):\penalty0 1146--1151, June 2013.

\bibitem[Masnadi-Shirazi et~al.(2010)Masnadi-Shirazi, Mahadevan, and Vasconcelos]{Hamed:2010}
Hamed Masnadi-Shirazi, Vijay Mahadevan, and Nuno Vasconcelos.
\newblock {On the design of robust classifiers for computer vision}.
\newblock In \emph{IEEE Conference on Computer Vision and Pattern Recognition (CVPR)}, 2010.

\bibitem[Menon et~al.(2015)Menon, van Rooyen, Ong, and Williamson]{Menon2015}
Aditya Menon, Brendan van Rooyen, Cheng~Soon Ong, and Robert~C. Williamson.
\newblock {Learning from Corrupted Binary Labels via Class-Probability Estimation}.
\newblock In \emph{Proceedings of the 32nd International Conference on Machine Learning (ICML-15)}, pages 125--134, 2015.

\bibitem[Micchelli et~al.(2006)Micchelli, Xu, and Zhang]{micchelli2006universal}
Charles~A Micchelli, Yuesheng Xu, and Haizhang Zhang.
\newblock Universal kernels.
\newblock \emph{Journal of Machine Learning Research}, 7\penalty0 (Dec):\penalty0 2651--2667, 2006.

\bibitem[M{\"u}ller(1997)]{muller1997integral}
Alfred M{\"u}ller.
\newblock Integral probability metrics and their generating classes of functions.
\newblock \emph{Advances in Applied Probability}, pages 429--443, 1997.

\bibitem[Natarajan et~al.(2013)Natarajan, Dhillon, Ravikumar, and Tewari]{Natarajan:2013}
Nagarajan Natarajan, Inderjit~S Dhillon, Pradeep~D Ravikumar, and Ambuj Tewari.
\newblock {Learning with Noisy Labels}.
\newblock In \emph{Advances in Neural Information Processing Systems (NIPS)}, pages 1196--1204, 2013.

\bibitem[Rahimi and Recht(2007)]{Rahimi:2007}
Ali Rahimi and Benjamin Recht.
\newblock {Random Features for Large-Scale Kernel Machines}.
\newblock In \emph{Advances in Neural Information Processing Systems (NIPS)}, pages 1177--1184, 2007.

\bibitem[Reid and Williamson(2011)]{Reid2009b}
Mark~D. Reid and Robert~C. Williamson.
\newblock {Information, divergence and risk for binary experiments}.
\newblock \emph{The Journal of Machine Learning Research}, 12:\penalty0 731--817, 2011.

\bibitem[Sch\"{o}lkopf and Smola(2002)]{Scholkopf:2002}
Bernhard Sch\"{o}lkopf and Alexander~J. Smola.
\newblock \emph{{Learning with kernels}}, volume 129.
\newblock MIT Press, 2002.

\bibitem[Scott et~al.(2013)Scott, Blanchard, and Handy]{scott2013classification}
Clayton Scott, Gilles Blanchard, and Gregory Handy.
\newblock Classification with asymmetric label noise: Consistency and maximal denoising.
\newblock In \emph{Conference on Learning Theory}, pages 489--511, 2013.

\bibitem[Servedio(1999)]{Servedio1999}
Rocco~A. Servedio.
\newblock {On PAC learning using Winnow, Perceptron, and a Perceptron-like algorithm}.
\newblock In \emph{Proceedings of the Twelfth Annual Conference on Computational Learning Theory}, pages 296--307, 1999.

\bibitem[Shalev-Shwartz et~al.(2011)Shalev-Shwartz, Singer, Srebro, and Cotter]{shalev2011pegasos}
Shai Shalev-Shwartz, Yoram Singer, Nathan Srebro, and Andrew Cotter.
\newblock Pegasos: Primal estimated sub-gradient solver for svm.
\newblock \emph{Mathematical programming}, 127\penalty0 (1):\penalty0 3--30, 2011.

\bibitem[Shawe-Taylor and Cristianini(2004)]{Shawe-Taylor:2004}
John Shawe-Taylor and Nello Cristianini.
\newblock \emph{{Kernel Methods for Pattern Analysis}}, volume~47.
\newblock Cambridge University Press, 2004.
\newblock ISBN 0521813972.
\newblock \doi{10.2277}.

\bibitem[Shawe-Taylor et~al.(1998)Shawe-Taylor, Bartlett, Williamson, and Anthony]{shawe1998structural}
John Shawe-Taylor, Peter~L. Bartlett, Robert~C Williamson, and Martin Anthony.
\newblock Structural risk minimization over data-dependent hierarchies.
\newblock \emph{Information Theory, IEEE Transactions on}, 44\penalty0 (5):\penalty0 1926--1940, 1998.

\bibitem[Sriperumbudur et~al.(2009)Sriperumbudur, Fukumizu, Gretton, Lanckriet, and Sch\"{o}lkopf]{Sriperumbudur2009}
Bharath~K. Sriperumbudur, Kenji Fukumizu, Arthur Gretton, Gert R.~G. Lanckriet, and Bernhard Sch\"{o}lkopf.
\newblock {Kernel Choice and Classifiability for RKHS Embeddings of Probability Distributions.}
\newblock In \emph{In Neural Information Processing Systems (NIPS) 2009}, pages 1750--1758, 2009.

\bibitem[Steinwart(2001)]{steinwart2001influence}
Ingo Steinwart.
\newblock On the influence of the kernel on the consistency of support vector machines.
\newblock \emph{Journal of Machine Learning Research}, 2\penalty0 (Nov):\penalty0 67--93, 2001.

\bibitem[Steinwart and Christmann(2008)]{Steinwart:2008}
Ingo Steinwart and Andreas Christmann.
\newblock \emph{{Support Vector Machines}}.
\newblock Springer, 2008.

\bibitem[Stempfel and Ralaivola(2009)]{Stempfel:2007b}
Guillaume Stempfel and Liva Ralaivola.
\newblock {Learning SVMs from Sloppily Labeled Data}.
\newblock In \emph{International Conference on Artificial Neural Networks}, volume 5768, pages 884--893. 2009.

\bibitem[Tibshirani(1996)]{Tibshirani1996}
Robert Tibshirani.
\newblock {Regression shrinkage and selection via the lasso}.
\newblock \emph{Journal of the Royal Statistical Society. Series B (Methodological)}, pages 267--288, 1996.

\bibitem[Tibshirani et~al.(2002)Tibshirani, Hastie, Narasimhan, and Chu]{Tibshirani:2002}
Robert Tibshirani, Trevor Hastie, Balasubramanian Narasimhan, and Gilbert Chu.
\newblock {Diagnosis of multiple cancer types by shrunken centroids of gene expression}.
\newblock \emph{Proceedings of the National Academy of Sciences}, 99\penalty0 (10):\penalty0 6567--6572, 2002.

\bibitem[Tolstikhin et~al.(2016)Tolstikhin, Sriperumbudur, and Muandet]{tolstikhin2016minimax}
Ilya Tolstikhin, Bharath Sriperumbudur, and Krikamol Muandet.
\newblock Minimax estimation of kernel mean embeddings.
\newblock \emph{arXiv preprint arXiv:1602.04361}, 2016.

\bibitem[van Rooyen and Williamson(2018)]{JMLR:v18:16-315}
Brendan van Rooyen and Robert~C. Williamson.
\newblock A theory of learning with corrupted labels.
\newblock \emph{Journal of Machine Learning Research}, 18\penalty0 (228):\penalty0 1--50, 2018.
\newblock URL \url{http://jmlr.org/papers/v18/16-315.html}.

\bibitem[van Rooyen et~al.(2015)van Rooyen, Menon, and Williamson]{van2015learning}
Brendan van Rooyen, Aditya Menon, and Robert~C Williamson.
\newblock Learning with symmetric label noise: The importance of being unhinged.
\newblock In \emph{Advances in Neural Information Processing Systems}, pages 10--18, 2015.

\bibitem[Welling(2009)]{Welling2009}
Max Welling.
\newblock {Herding dynamical weights to learn}.
\newblock In \emph{Proceedings of the 26th Annual International Conference on Machine Learning}, 2009.

\bibitem[Williams and Seeger(2001)]{williams2001using}
Christopher K.~I. Williams and Matthias Seeger.
\newblock Using the nystr\"{o}m method to speed up kernel machines.
\newblock \emph{Advances in Neural Information Processing Systems}, pages 682--688, 2001.

\bibitem[Wolfe(1976)]{wolfe1976finding}
Philip Wolfe.
\newblock Finding the nearest point in a polytope.
\newblock \emph{Mathematical Programming}, 11\penalty0 (1):\penalty0 128--149, 1976.

\end{thebibliography}

\end{document}